\newtheorem{theorem}{Theorem}
\newtheorem{corollary}{Corollary}
\newtheorem{lemma}{Lemma}
\newtheorem{remark}{Remark}
\newtheorem{definition}{Definition}
\newtheorem{assumption}{Assumption}
\newtheorem{proposition}{Proposition}
\renewcommand{\t}{^{\mbox{\tiny\sf T}}}
\newcommand{\bremark}{\begin{remark}
\begin{rm}}
\newcommand{\eremark}{ \end{rm}\hfill \rule{1mm}{2mm}
\end{remark} }
\newcommand{\btheorem}{\begin{theorem} \begin{it}}
\newcommand{\etheorem}{\end{it} \hfill \rule{1mm}{2mm}
\end{theorem} }
\newcommand{\blemma}{\begin{lemma} \begin{it} }
\newcommand{\elemma}{ \end{it} \hfill\rule{1mm}{2mm}
\end{lemma} }
\newcommand{\bcorollary}{\begin{corollary} \begin{it} }
\newcommand{\ecorollary}{ \end{it} \hfill\rule{1mm}{2mm}
\end{corollary} }
\newcommand{\bdefinition}{\begin{definition} }
\newcommand{\edefinition}{ \hfill\rule{1mm}{2mm}
\end{definition} }
\newcommand{\bproposition}{\begin{proposition} }
\newcommand{\eproposition}{\hfill \rule{1mm}{2mm}
\end{proposition} }
\newcommand{\bexample}{\begin{example} \begin{rm}}
\newcommand{\eexample}{ \end{rm} \hfill\rule{1mm}{2mm}
\end{example} }
\newcommand{\bassumption}{\begin{assumption} }
\newcommand{\eassumption}{\hfill \rule{1mm}{2mm}
\end{assumption} }
\newcommand{\balgorithm}{\medskip\begin{algorithm} \rm}
\newcommand{\ealgorithm}{ \hfill \rule{1mm}{2mm}\medskip
\end{algorithm} }
\newcommand{\basm}{\begin{assumption} \begin{rm} }
\newcommand{\easm}{ \end{rm} \hfill\rule{1mm}{2mm}
\end{assumption} }
\begin{document}

\title{Coordinated Guiding Vector Field Design for Ordering-Flexible Multi-Robot Surface Navigation }

\author{
\IEEEauthorblockN{Bin-Bin Hu,Hai-Tao Zhang\IEEEauthorrefmark{1}, 
Weijia Yao, 
Zhiyong Sun, 
 and Ming Cao\IEEEauthorrefmark{1}
 }
\thanks{
 Bin-Bin Hu and Hai-Tao Zhang are with the School of Artificial Intelligence and Automation, the Engineering Research Center of Autonomous Intelligent Unmanned Systems, the Key Laboratory of Image Processing and Intelligent Control, and the State Key Lab of Digital Manufacturing Equipment and Technology, Huazhong University of Science and Technology, Wuhan 430074, P.R.~China (e-mails: binbin.hu@ntu.edu.sg, zht@mail.hust.edu.cn).
}
\thanks{Weijia Yao is with the School of Robotics, Hunan University, Hunan 410082, P.R.~China. He is also is with the Institute of Engineering and Technology, University
of Groningen, 9747 AG Groningen, the Netherlands (e-mail:
w.yao@rug.nl). }
\thanks{Zhiyong Sun is with
Department of Electrical Engineering, Eindhoven University of Technology,
the Netherlands (e-mail: z.sun@tue.nl).}
\thanks{Ming Cao is with the Institute of Engineering and Technology,
University of Groningen, 9747 AG Groningen, the Netherlands (e-mail: m.cao@rug.nl).
}
}

\maketitle

\begin{abstract}
We design a distributed coordinated guiding vector field (CGVF) for a group of robots to achieve ordering-flexible 
motion coordination while maneuvering on a desired two-dimensional (2D) surface. The CGVF is characterized by three terms, i.e., 
a convergence term to drive the robots to converge to the desired surface, a propagation term to provide a traversing direction for maneuvering on the desired surface, 
and a coordinated term to achieve the surface motion coordination with an arbitrary ordering of the robotic group. By setting the surface parameters
as additional virtual coordinates, the proposed approach eliminates potential singularity of the CGVF and enables both the global convergence to the desired surface and the maneuvering on the surface from all possible initial conditions. The ordering-flexible surface motion coordination is realized by each robot to share with its neighbors only two virtual coordinates, i.e. that of a given target and that of its own, which reduces the communication and computation cost in multi-robot {\it surface navigation}.
Finally, the effectiveness of the CGVF is substantiated by
extensive numerical simulations.

\end{abstract}
 
\begin{IEEEkeywords}
Ordering-flexible surface navigation, cooperative control, network analysis and control, agents and autonomous systems
\end{IEEEkeywords}

\IEEEpeerreviewmaketitle

\section{Introduction}
Multi-robot navigation is an important research topic in robotics due to its wide applications such as area coverage.
In specific navigation operations, such as surveillance,  convoy and rescue, patrolling, 
and defense applications \cite{macwan2014multirobot,hu2020multiple}, one common requirement for a group of robots is to accurately follow
possibly different paths and coordinate their motions subject to some geometric or parametric constraints. Such missions are generally referred to as multi-robot path navigation, or coordinated path following \cite{peng2005coordinating}.
Significantly, one of the most fundamental techniques in multi-robot path navigation is the path-following algorithm, which has been 
widely studied in recent years. Researchers have utilized projection points \cite{aguiar2007trajectory}, line of sight (LOS)\cite{rysdyk2006unmanned} or guiding vector fields \cite{kapitanyuk2017guiding}. Among these path-following algorithms, the guiding-vector-field (GVF) approach achieves a higher path-following accuracy with a lower control energy cost, which guarantees the global convergence for some self-intersecting curves \cite{yao2021singularity}.

In this context, researchers have considered a more challenging multi-robot path navigation problem with the assistance of projection-point and LOS methods in the literature. For instance, early efforts were devoted to straight-line paths with adaptive controllers \cite{burger2009straight}. For multi-robot path navigation following circular paths, a nested-invariant-set approach was proposed in \cite{doosthoseini2015coordinated}. Other relevant methods concerning circular paths were investigated in \cite{hu2021distributed1,yao2019distributed,hu2021bearing}. For increasingly sophisticated missions, more attention is paid to more general parameterized curves~\cite{ghommam2010formation,ghabcheloo2009coordinated}.  
Afterwards, for non-parameterized paths, a distributed hybrid control law  was developed in~\cite{lan2011synthesis}. Moreover, simple closed and  periodic-changing paths were investigated in~\cite{sabattini2015implementation}.

Another research line in multi-robot path navigation is the GVF algorithm. Till now, there are only a few studies using vector fields. Recently, for the tracking of desired circular paths with multiple unmanned aerial vehicles, a distributed vector-field algorithm was proposed in \cite{de2017circular}. Afterwards, a vector-field law integrating potential functions was designed in \cite{nakai2013vector} to enable the global convergence to geometrical patterns such as lines and circles, but the relative distances among neighboring robots cannot be tuned. These two studies \cite{de2017circular,nakai2013vector} only considered simple curves in a 2D plane. More recently, a decentralized control law in~\cite{pimenta2013decentralized} has extended the results to the 3D Euclidean space. By utilizing a singularity-free GVF with a virtual coordinate in \cite{yao2021singularity}, a novel navigation method was developed in \cite{yao2021multi} to guide multiple robots to cooperatively follow possibly distinct desired paths in an $n$-dimensional Euclidean space. Later, it was extended to a spontaneous-ordering platoon whereas maneuvering along a predefined path~\cite{hu2023spontaneous}.

However, in some navigation tasks such as pipeline maintenance and curved-area patrolling, robots are required to accurately follow desired surfaces rather than desired paths while coordinating their motions. We call such tasks the multi-robot {\it surface navigation} tasks, which have potential applications in terrain hazard detection, seafloor mapping, etc. Compared with desired paths, a desired surface involves more sophisticated topological characteristics, such as the singularity in GVF, which inevitably gives rise to more challenges. 
Moreover, as the surface often features non-zero curvature, the inter-robot distance is not the common Euclidean distance but the geodesic distance, which is arduous to calculate and hence the performance in coordinating the robot's motion may degrade.

Another challenging issue arising from multi-robot navigation is ordering-flexible coordination, which is crucial in dynamical environments. 
Most of the conventional multi-robot path navigation algorithms \cite{burger2009straight,doosthoseini2015coordinated,ghommam2010formation,hu2021bearing,hu2021distributed1,yao2019distributed,ghabcheloo2009coordinated,sabattini2015implementation,de2017circular,nakai2013vector,pimenta2013decentralized,yao2021multi} only considered the coordination with fixed orderings, which predefines and fixes each agent's neighbor relation with a specific ordering. Recently, the notes~\cite{montenbruck2017fekete,yao2022guidingArxiv} studied the multi-robot formation coordination which automatically
arranges space distances among each pair of robots on some manifolds, but the orderings are still fixed.
By contrast, ordering-flexible coordination is to coordinate all robots' motions with an arbitrary ordering and thus no ordering is predetermined for specific robots, enhancing the flexibility of coordination in a confined space, e.g., narrow passage. However, \cite{lan2011synthesis,hu2023spontaneous} only studied such ordering-flexible coordination in the path navigation scenario. Leveraging such ordering-flexible coordination on the {\it surface navigation} task still remains a challenging problem.

In this note,  we extend the multi-robot path navigation to multi-robot {\it surface navigation} and address the aforementioned three challenging issues,  i.e., i) singularity during the process of surface convergence and maneuvering, ii) costly calculation of inter-robot coordination and iii) ordering-flexible motion coordination. 
Specifically, by setting the surface parameters as two additional virtual coordinates, we extend the dimensions of the coordinated guiding vector field (CGVF) to eliminate singularity. Meanwhile, by incorporating neighboring robots' virtual coordinates and the target's virtual coordinates, we achieve ordering-flexible multi-robot {\it surface navigation}. 
 In summary, the main contributions of this note are three-fold as follows.

\begin{enumerate}

\item  Distinct from the previous path and surface navigation algorithms \cite{burger2009straight,doosthoseini2015coordinated,ghommam2010formation,hu2021bearing,hu2021distributed1,yao2019distributed,ghabcheloo2009coordinated,sabattini2015implementation,de2017circular,nakai2013vector,pimenta2013decentralized,yao2021multi,montenbruck2017fekete,yao2022guidingArxiv}, which focus on the ordering-fixed coordination, we design a distributed CGVF for multiple robots to achieve more flexible coordinated motions with arbitrary orderings while maneuvering on a desired 2D surface.

\item Compared with the ordering-flexible navigation~\cite{lan2011synthesis} only featuring local convergence to the desired path, the proposed approach guarantees the global convergence to ordering-flexible {\it surface navigation} and eliminates potential singularity of the GVF by setting the surface parameters as two additional virtual coordinates.

\item Compared with the ordering-fixed surface coordination \cite{montenbruck2017fekete} via arduous calculation of inter-robot geodesic distance, the proposed approach can reduce both the communication and computation costs of multi-robot ordering-flexible {\it surface navigation} by the sensing of only two virtual coordinates.

\end{enumerate}

The rest of the note is organized below. The preliminaries and problem are formulated in Section~II.  The main technical result is derived in Section III. The effectiveness of the proposed CGVF is verified in Section IV. Finally, the conclusion is drawn in Section V.

{\it Notations:} The real numbers and positive real numbers are denoted by $\mathbb{R},\mathbb{R}^+$, respectively. The $n$-dimensional Euclidean space is denoted by $\mathbb{R}^n$. The integer numbers are denoted by $\mathbb{Z}$. The notation $\mathbb{Z}_i^j$ represents the set $\{m\in \mathbb{Z}~|~i\leq m\leq j\}$.  
The Euclidean norm of a vector $v$ is $\|v\|$. The $n$-dimensional identity matrix is represented by $I_n$.

\section{Preliminaries}
Consider a multi-robot system consisting of $N$ robots represented by ${\cal V}=\{1,2,\dots, N\}$. Each robot is described by the following kinematic equation, 
\begin{align}
\label{kinetic_F}
 \dot{x}_i &=u_i, i\in\mathcal V,
\end{align}
where $x_i(t) :=[x_{i,1}, \dots, x_{i,n}]\t\in\mathbb{R}^n$, $u_i(t):=[u_{i,1}, \dots, u_{i,n}]\t\in\mathbb{R}^n$ denote the positions and the control inputs of robot $i$ in the $n$-dimensional Euclidean space, respectively.
The definitions concerning CGVF and ordering-flexible {\it surface navigation}, assumptions, and problem formulation are introduced below.

\subsection{Higher-Dimensional GVF for Surface }
Suppose a desired 2D surface $\mathcal S^{phy}$ in the $n$-dimensional Euclidean space is described by a zero-level set
of $n-2$ implicit functions $\phi_i$,
\begin{align}
\label{implicit_path_definition}
\mathcal S^{phy}:=\{ \sigma\in\mathbb{R}^n~|~\phi_i(\sigma)=0, i=1,\dots, n-2\},
\end{align}
where $\sigma\in\mathbb{R}^n$ are the coordinates and $\phi_i(\cdot): \mathbb{R}^n\rightarrow\mathbb{R}$ are twice continuously differentiable, i.e., $\phi(\cdot)\in C^2$.
Using $\mathcal S^{phy}$ in \eqref{implicit_path_definition},  we are ready to introduce the GVF $\chi^{phy}\in\mathbb{R}^n$ (see, e.g., \cite{kapitanyuk2017guiding}),
\begin{align}
\label{eq_GVF}
\chi^{phy}=\times \big(\nabla\phi_1(\sigma), \cdots, \nabla\phi_{n-2}(\sigma), \mathbf{m}\big)-\sum_{i=1}^{n-2}k_{i}\phi_i(\sigma)\nabla\phi_i(\sigma),
\end{align}
where $k_i\in\mathbb{R}^+$ is the control gain, $\times(\cdot)$ represents the cross product, $\nabla\phi_i(\cdot): \mathbb{R}^{n}\rightarrow\mathbb{R}^{n}$ denotes the gradient of $\phi_i$ w.r.t. $\sigma$, and $\mathbf{m}:=[m_1, m_2, \dots, m_n]\t\in\mathbb{R}^n$ is an auxiliary column vector with at least one non-zero element. The GVF in \eqref{eq_GVF} consists of two terms: the first propagation term $\times (\nabla\phi_1, \cdots, \nabla\phi_{n-2}, \mathbf{m})$ is orthogonal to all the gradients $\nabla\phi_i, i\in \mathbb{Z}_1^{n-2}$ whereas the auxiliary vector $\mathbf{m}$ provides a traversing direction for maneuvering on the desired surface; the second convergence term $\sum_{i=1}^{n-2}k_{i}\phi_i\nabla\phi_i$ is to guide robots to approach the desired surface $\mathcal S^{phy}$. 
However, the GVF in \eqref{eq_GVF} may fail to work if there exist singular points (i.e., $\chi^{phy}=\mathbf{0}$), which hinders the global convergence. To eliminate the undesirable singular points and achieve the global convergence, we define a higher-dimensional GVF for the corresponding surface $\mathcal S^{phy}$ below.

\begin{definition}
\label{definition_GVF}
(Higher-dimensional GVF) \cite{yao2021singularity} Given a desired surface $\mathcal S^{phy}$ parameterized by 
$\mathcal S^{phy}:=\{[\sigma_1, \cdots, \sigma_n]\t\in\mathbb{R}^n~|~\sigma_j=f_j(\omega_1, \omega_2 ), j\in\mathbb{Z}_1^n\}$ 
with the $j$-th cooridinate $\sigma_j\in\mathbb{R}$, the virtual coordinate parameters $\omega_1, \omega_2\in\mathbb{R}$, and the twice continuously differentiable function $ f_j \in C^2$. We define the corresponding higher-dimensional surface 
$\mathcal S^{hgh}:=\{\xi\in\mathbb{R}^{n+2}~|~ \phi_j(\xi)=0, j\in\mathbb{Z}_1^n\}$ with the generalized coordinate vector $\xi:=[\sigma_1,\dots, \sigma_n, \omega_1, \omega_2]\t$ and $\phi_{j}(\xi)=\sigma_j-f_j(\omega_1, \omega_2)\in\mathbb{R}$. Since $\mathcal S^{phy}$ corresponds to the projection of $\mathcal S^{hgh}$ on the first $n$ coordinates,
the higher-dimensional GVF $\chi^{hgh}\in\mathbb{R}^{n+2}$ is defined to be
\begin{align}
\label{high_eq_GVF1}
\chi^{hgh}=\times \big(\nabla\phi_1(\xi), \cdots, \nabla\phi_n(\xi), \mathbf{m}\big)-\sum_{j=1}^nk_{j}\phi_j(\xi)\nabla\phi_j(\xi),
\end{align}
where $\mathbf{m}:=[m_1, m_2, \dots, m_{n+1}, m_{n+2}]\t\in\mathbb{R}^{n+2}$ denotes the auxiliary column vector in Eq.~\eqref{eq_GVF}, $k_j\in\mathbb{R}^+$ is the control gain of surface function~$\phi_j$, and $\nabla\phi_j:=[0,\dots,1,\dots,-\partial{f}_{j}^{[1]}, -\partial{f}_{j}^{[2]} ]\t\in\mathbb{R}^{n+2}, j\in\mathbb{Z}_1^n$ represents the gradient of $\phi_j(\xi)$ w.r.t. $\xi$. Here, $\partial{f}_{j}^{[1]}:={\partial f_{j}(\omega_{1}, \omega_{2})}/{\partial\omega_{1}}$ and $\partial{f}_{j}^{[2]}:={\partial f_{j}(\omega_{1}, \omega_{2})}/{\partial\omega_{2}}$ denote the partial derivatives of $f_{j}(\omega_{1}, \omega_{2})$ w.r.t. $\omega_{1}$ and $\omega_{2}$, respectively. 
To simplify the calculation of $\times (\nabla\phi_1, \cdots, \nabla\phi_n, \mathbf{m})$, we set $\mathbf{m}=[0, \dots, 0, m_{n+1}, m_{n+2}]\t\in\mathbb{R}^{n+2}$ with any non-zero constants $m_{n+1}\neq0, m_{n+2}\neq0$, and then calculate the higher-dimensional GVF $\chi^{hgh}\in\mathbb{R}^{n+2}$ to be
\begin{align}
\label{high_eq_GVF2}
      \chi^{hgh}  =\begin{bmatrix}
        	              (-1)^n (m_{n+2}\partial{f}_{1}^{[1]}-m_{n+1}\partial{f}_{1}^{[2]})-k_{1}\phi_{1} \\
	              \vdots\\
                      (-1)^n(m_{n+2}\partial{f}_{n}^{[1]}-m_{n+1}\partial{f}_{n}^{[2]})-k_{n}\phi_{n}\\
                      (-1)^nm_{n+2}+\sum\limits_{j=1}^nk_{j}\phi_{j}\partial{f}_{j}^{[1]}\\
                      -(-1)^nm_{n+1}+\sum\limits_{j=1}^nk_{j}\phi_{j}\partial{f}_{j}^{[2]}\\
        \end{bmatrix},   
\end{align}
which recovers the desired surface $\mathcal S^{phy}$ by projecting $\chi^{hgh}$ to the first $n$-dimensional Euclidean space.
\end{definition}

In Definition~\ref{definition_GVF}, the additional coordinates $\omega_{1}, \omega_2$ ensure the singularity-free property of $\chi^{hgh}$ (i.e., $\chi^{hgh}\neq0$) \cite{yao2021singularity} due to the non-zero terms $(-1)^nm_{n+2}, -(-1)^nm_{n+1}$ in Eq.~\eqref{high_eq_GVF2}, which thus eliminates the singular points of the original GVF $\chi^{phy}$ in~\eqref{eq_GVF} and in turn guarantees well-defined vector fields for all states.

\subsection{Ordering-Flexible Multi-Robot {\it Surface Navigation}}
From Definition~\ref{definition_GVF}, we are ready to consider the task of multi-robot {\it surface navigation}.
Suppose that the $i$-th desired surface $\mathcal S_i^{phy}$ for the $i$-th robot $i\in \mathcal V$ in the $n$-dimensional Euclidean space is described as, 
\begin{align}\label{desired_surface}
\mathcal S_i^{phy}=\{\sigma_i\in\mathbb{R}^n~|~\phi_{i,j}(\sigma_i)=0, j\in\mathbb{Z}_1^n\},
\end{align}
where $\sigma_i:=[\sigma_{i,1} , \dots, \sigma_{i,n}]\t$ are the coordinates of $\mathcal S_i^{phy}$, 
and $\phi_{i,j}(\sigma_i):=\sigma_{i,j}-f_{i,j}(\omega_{i,1}, \omega_{i,2})$ are the zero-level implicit functions of the $i$-th desired surface. Here, 
$f_{i,j}(\omega_{i,1}, \omega_{i,2})\in C^2,  j\in\mathbb{Z}_1^n$ are the parametric functions, and $\omega_{i,1}, \omega_{i,2}$ are the virtual coordinates of the $i$-th desired surface.
Following Definition~\ref{definition_GVF}, the $i$-th desired surface $\mathcal S_i^{phy}$ is transformed to the corresponding higher-dimensional surface $\mathcal S_i^{hgh}=\{\xi_i\in\mathbb{R}^{n+2}~|~\phi_{i,j}(\xi_i)=0, j\in\mathbb{Z}_1^n\}$
with the generalized coordinates $\xi_i:=[\sigma_{i,1} , \dots, \sigma_{i,n}, \omega_{i,1},$ $ \omega_{i,2}]\t$.

Defining $p_i:=[x_{i,1}, \dots, x_{i,n}, \omega_{i,1}, \omega_{i,2}]\t\in\mathbb{R}^{n+2}$ and substituting the position $x_i=[x_{i,1}, \dots, x_{i,n}]\t$ of the $i$-th robot in \eqref{kinetic_F} into the $i$-th desired surface $\mathcal S_i^{hgh}$,  the surface-convergence error $\phi_{i,j}(p_i), \forall j\in\mathbb{Z}_1^n,$ between robot $i$ and the desired higher-dimensional surface $\mathcal S_i^{hgh}$ becomes 
\begin{align}
\label{err_phi}
\phi_{i,j}(p_i)=&x_{i,j}-f_{i,j}(\omega_{i,1}, \omega_{i,2}),  j\in \mathbb{Z}_1^n.
\end{align}
\begin{figure}[!htb]
  \centering
  \includegraphics[width=7cm]{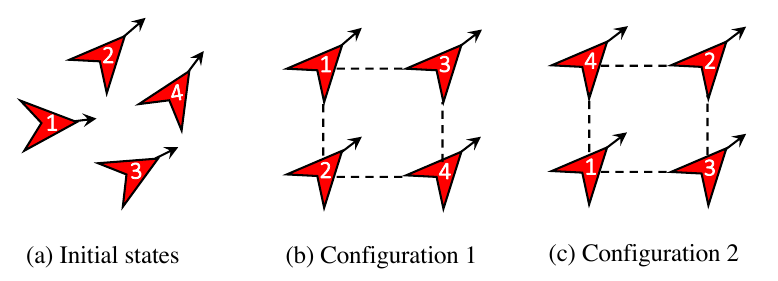}
  \caption{Illustration of the ordering-flexible motion coordination. }
  \label{ordering_free}
\end{figure}
Moreover, let $\Omega_i:=[\omega_{i,1}, \omega_{i,2}]\t\in\mathbb{R}^2$, and one has that the sensing neighborhood $\mathcal N_i(t)$ of robot $i$ is
\begin{align}
\label{sening_neighbor}
\mathcal N_i(t):=\{k\in {{\cal V}},k\neq i \;\big|~ \| \Omega_{i,k}(t)\| <R\}
\end{align} 
with $\Omega_{i,k}:=\Omega_i-\Omega_k$, the sensing range $R>r\in\mathbb{R}^+$ and the specified safe range $r\in\mathbb{R}^+$ between any two robots.

\begin{remark}
Different from calculating the complicated geodesic distance to determine the neighborhood $\mathcal N_i(t)$, we hereby consider a more convenient method for $\mathcal N_i(t)$ using the relative value of the virtual coordinates $\|\Omega_{i,k}\|$ in Eq.~\eqref{sening_neighbor} rather than the complicated geodesic distances, which reduces sensing and calculation burdens. To facilitate the understanding of $\mathcal N_i(t)$ in \eqref{sening_neighbor}, an intuitive example is given below. For a desired horizontal plane surface in \eqref{desired_surface}, e.g., $\phi_{i,1}=\sigma_{i,1}-\omega_{i,1}, \phi_{i,2}=\sigma_{i,2}-\omega_{i,2}, i\in \mathcal{V}$, the relative value $\Omega_{i,k}$ becomes the conventional Euclidean distance in a 2D plane, which implies that $\mathcal N_i(t)$ in \eqref{sening_neighbor} is reasonable.  
\end{remark}

Moreover, as $\mathcal N_i(t)$ in \eqref{sening_neighbor} is calculated according to the time-varying $\|\Omega_{i,k}(t)\|$, the neighborhood set
$\mathcal N_i(t)$  is time-varying as well. This leads to an ordering-flexible 
coordination whereas poses some new challenging issues in the stability analysis.
As shown in Figs.~\ref{ordering_free} (b) and (c), the configurations~$1$ and $2$ both satisfy the same pattern of the four robots but with different orderings. 

To achieve the ordering-flexible coordination on the surface, we also introduce a virtual target robot $ v^\ast$ moving on the desired surface $\mathcal S_i^{phy}$ with the designed $\chi^{hgh}$ in Eq.~\eqref{high_eq_GVF2}, which is to provide an attraction term for other robots. Define  $\Omega^{\ast}:=[\omega^{\ast}_1, \omega^{\ast}_2]\t\in\mathbb{R}^2$. As the virtual robot $ v^\ast$ is already on the desired surface, it holds that $\phi_{\ast,j}=0, \forall j\in\mathbb{Z}_1^n$, which implies that 
the derivatives of $\Omega^{\ast}=[\omega_1^{\ast}, \omega_2^{\ast}]\t$ become the last two terms of Eq. \eqref{high_eq_GVF2}, namely, $\dot{\omega}_1^{\ast}=(-1)^nm_{n+2}+\sum_{j=1}^nk_{\ast,j}\phi_{\ast,j}\partial{f}_{\ast,j}^{[1]}=(-1)^nm_{n+2}, \dot{\omega}_2^{\ast}=-(-1)^nm_{n+1}+\sum_{j=1}^nk_{\ast,j}\phi_{\ast,j}\partial{f}_{\ast,j}^{[2]}=-(-1)^nm_{n+1}.$
Rewriting $\dot{\omega}_1^{\ast}, \dot{\omega}_2^{\ast}$ in a compact form as 
 \begin{align}
 \label{dynamic_virtual_target}
\dot{\Omega}^{\ast}=[\dot{\omega}^{\ast}_1, \dot{\omega}^{\ast}_2]\t=[(-1)^nm_{n+2}, -(-1)^nm_{n+1}]\t. 
 \end{align} 

\begin{definition}
\label{CPF_definition}
(Ordering-flexible multi-robot {\it surface navigation}) 
The group of robots ${\cal V}$ collectively forms an ordering-flexible coordinated navigation maneuvering on the desired surface $\mathcal {S}^{phy}$ if the following objectives are accomplished,

1. (\textbf{Surface convergence})~All the robots converge to the desired surface, i.e., $\lim_{t\rightarrow\infty} \phi_{i,j}(p_i(t))=0,\forall i\in\mathcal V, j\in \mathbb{Z}_1^n$.

2. (\textbf{Surface maneuvering})~All the robots maneuver on the desired surface and maintain a desirable  motion coordination pattern, i.e.,
$\lim_{t\rightarrow\infty} \dot{\Omega}_{i}(t)=\lim_{t\rightarrow\infty} \dot{\Omega}_{k}(t)\neq\mathbf{0}, \forall i\neq k \in{\cal V}$ with $\dot{\Omega}_i$ denoting the time derivative of $\Omega_i$ of robot $i$.

3. (\textbf{Ordering-flexible coordination}) All the robots coordinate their ordering-flexible motions distributively using neighboring robots’ virtual coordinates and the target virtual coordinates, i.e.,    
\begin{align}
\label{definition_condi_ordering_free}
&(a) \lim_{t\rightarrow\infty} \big(\frac{1}{N}\sum_{i=1}^N\Omega_{i}(t)-\Omega^{\ast}(t)\big)=\mathbf{0}, \forall i \in{\cal V}, \nonumber\\
&(b)~r<\lim_{t\rightarrow\infty} \|\Omega_{i}(t)-\Omega_{k}(t)\|<R,  \forall i\in{\cal V}, k\in \mathcal N_i(t),
\end{align} 
where $R\in\mathbb{R}^+, r\in\mathbb{R}^+$ are the specified sensing and safe radius in \eqref{sening_neighbor}, respectively.
\end{definition}

\subsection{Preliminaries on Assumptions}
\label{assumption_description}

\begin{assumption}
\label{assp_error} \cite{yao2021singularity} For any given $\kappa>0$, an arbitrary point $p_0\in\mathbb{R}^n$ and a desirable physical surface $\mathcal{S}_i^{phy}$, we assume that
$$\inf\{\|\phi(p_0)\|: \mbox{dist}(p_0,\mathcal S^{phy})\geq \kappa\}>0.$$
\end{assumption}

\begin{assumption}
\label{assp_intial_value}
 The initial virtual coordinates of the robots satisfy
$\|\Omega_{i,k}(0)\|>r,  \; \forall i\neq k \in {\cal V}$ with $r$ given in \eqref{definition_condi_ordering_free}.
\end{assumption}

Assumption~\ref{assp_intial_value} is necessary for Condition~(b): $\lim_{t\rightarrow\infty}$ $\|\Omega_{i,k}\|>r$ in Definition~\ref{CPF_definition}.

\begin{assumption}
\label{assp_area}
The total area $\mathcal A_i^{phy}$ of the desired surface $\mathcal{S}_i^{phy}$ is assumed to be sufficiently bigger than the required area when forming multi-robot {\it surface navigation}.
\end{assumption}

Assumption~\ref{assp_area} guarantees that there exists a sufficiently large area to distribute or contain all robots $\mathcal V$ freely.

\begin{assumption}
\label{assp_derivative}
The first and second derivatives of $f_{i,j}(\omega_{i,1}, $ $\omega_{i,2}), i\in\mathcal V, j\in \mathbb{Z}_1^n$ w.r.t. $\omega_{i,1}$ and $\omega_{i,2}$  are all bounded, i.e., 
\begin{align*}
&\bigg\|\frac{\partial f_{i,j}(\omega_{i,1}, \omega_{i,2})}{\partial\omega_{i,l}}\bigg\|\leq\varsigma_{i,1}, i\in\mathcal V, j\in \mathbb{Z}_1^n, l\in\mathbb{Z}_1^2,\nonumber\\
&\bigg\|\frac{\partial f_{i,j}(\omega_{i,1}, \omega_{i,2})}{\partial\omega_{i,l}\partial\omega_{i,q}}\bigg\|\leq\varsigma_{i,2}, i\in\mathcal V, j\in \mathbb{Z}_1^n, l, q\in\mathbb{Z}_1^2
\end{align*}
for some unknown positive constants $\varsigma_{i,1}, \varsigma_{i,2}\in \mathbb{R}^+$.
\end{assumption}
Assumption~\ref{assp_derivative} prevents the parameterization of the surface from changing too fast, which is a necessary condition for the convergence analysis in Lemma~\ref{lemma_step_2} later.

\subsection{Problem Formulation} 
Let $\partial{f}_{i,j}^{[1]}, \partial{f}_{i,j}^{[2]}$ be the partial derivatives of $f_{i,j}(\omega_{i,1}, \omega_{i,2})$ w.r.t. $\omega_{i,1}, \omega_{i,2}$, namely,
\begin{align}
\label{gradient_f_omega12}
\partial{f}_{i,j}^{[1]}:=&\frac{\partial f_{i,j}(\omega_{i,1}, \omega_{i,2})}{\partial\omega_{i,1}},~\partial{f}_{i,j}^{[2]}:=&\frac{\partial f_{i,j}(\omega_{i,1}, \omega_{i,2})}{\partial\omega_{i,2}}.
\end{align}
It follows from Eqs.~\eqref{err_phi} and~\eqref{gradient_f_omega12} that the gradient $\nabla\phi_{i,j}(p_i)$ of the surface function $\phi_{i,j}(p_i)$ along the vector $p_i=[x_{i,1}, \dots, x_{i,n}, \omega_{i,1}, \omega_{i,2}]\t$ can be calculated by $\nabla\phi_{i,j}(p_i):=[0, \dots, 1, \dots, -\partial{f}_{i,j}^{[1]},-\partial{f}_{i,j}^{[2]}]\t\in\mathbb{R}^{n+2}$,
where the $j$-th component of the gradient vector $\nabla\phi_{i,j}(p_i)$ is~$1$.
Moreover, one has $\dot{\phi}_{i,j}(p_i)=\nabla\phi_{i,j}(p_i)\t\dot{p}_i, i\in\mathcal V, j\in \mathbb{Z}_1^n.$
Let $\Phi_i(p_i)=[\phi_{i,1}, \phi_{i,2}, \dots, \phi_{i,n}]\t$, $u_i=[u_{i,1}, u_{i,2}, \dots, u_{i,n}]\t$, and $u_i^\Omega:=[\omega_{i,1}^r, \omega_{i,2}^r]\t$ be the desired inputs of the virtual coordinates $\Omega_i$.
Combining the dynamics of the virtual coordinates $\dot{\Omega}_i:=u_i^\Omega$ with the $i$-th robot dynamics in \eqref{kinetic_F},  one has that
\begin{align}
\label{dynamic_path}
\begin{bmatrix}
\dot{\Phi}_i\\
\dot{\Omega}_{i}\\
\end{bmatrix}
=
D_i 
\begin{bmatrix}
u_i\\
u_i^\Omega\\
\end{bmatrix}
\end{align}
with $\Phi_i=\Phi_i(p_i), \phi_{i,j}= \phi_{i,j}(p_i)$ and 
\begin{align*}
D_i=& \begin{bmatrix}                      
                       1 & 0 & \cdots & 0 & -\partial{f}_{i,1}^{[1]} & -\partial{f}_{i,1}^{[2]}\\
                       0 & 1 &\cdots & 0 & -\partial{f}_{i,2}^{[1]} & -\partial{f}_{i,2}^{[2]}\\
                       \vdots & \vdots & \ddots & \vdots &\vdots & \vdots\\
                       0 & 0& \cdots &1 & -\partial{f}_{i,n}^{[1]} & -\partial{f}_{i,n}^{[2]}\\
                       0 & 0 &\cdots &0 & 1 & 0\\
                       0 & 0 &\cdots &0 & 0 & 1\\
                       \end{bmatrix}\in \mathbb{R}^{(n+2)\times (n+2)}.
\end{align*}

Now, we are ready to introduce the main problem addressed in this note.

{\bf Problem 1}: (Ordering-flexible multi-robot {\it surface navigation})
Design a distributed control signal 
\begin{align}
\label{pro_desired_signal}
\{u_{i}, \omega_{i,1}^r, \omega_{i,2}^r \}:=&\chi_i^{hgh}(\phi_{i,j}, \partial{f}_{i,j}^{[1]}, \partial{f}_{i,j}^{[2]}, \Omega_{i}, \Omega_{k}, \Omega^{\ast}), \nonumber\\
                                                                  &\forall i\in{\cal V}, j\in\mathbb{Z}_1^n, k\in \mathcal N_i(t),
\end{align}
as a CGVF for the multi-robot system composed of \eqref{kinetic_F} and \eqref{pro_desired_signal} to attain surface convergence and maneuvering, and ordering-flexible coordination (i.e., Objectives 1-3 in Definition~\ref{CPF_definition}).

\section{Main Technical Result }
Let the auxiliary column vector $\mathbf{m}$ in \eqref{high_eq_GVF1} be $\mathbf{m}=[0,\dots,0,-1,1]\t\in\mathbb{R}^{n+2}$ with $m_{n+1}=-1, m_{n+2}=1$ for simplicity. It then follows from $\chi^{hgh}$ in \eqref{high_eq_GVF2} that 
the CGVF $\chi_i^{hgh}$ for robot $i$ can be designed as,
\begin{align}
\label{desired_law}
u_{i,j}=&(-1)^n( \partial{f}_{i,j}^{[1]}+ \partial{f}_{i,j}^{[2]})-k_{i,j}\phi_{i,j}, j\in \mathbb{Z}_1^n, \nonumber\\
\omega_{i,1}^r=&(-1)^n+\sum\limits_{j=1}^nk_{i,j}\phi_{i,j} \partial{f}_{i,j}^{[1]}-c_i(\omega_{i,1}-\widehat{\omega}_{i,1})+\eta_{i,1},\nonumber\\
\omega_{i,2}^r=&(-1)^n+\sum\limits_{j=1}^nk_{i,j}\phi_{i,j} \partial{f}_{i,j}^{[2]}-c_i(\omega_{i,2}-\widehat{\omega}_{i,2})+\eta_{i,2},
\end{align}
where $k_{i,j}, c_i\in\mathbb{R}^+, i\in\mathcal V, j\in \mathbb{Z}_1^n$ are the control gains of surface-convegence errors and virtual coordinates, respectively, $\phi_{i,j}$, $\partial{f}_{i,j}^{[1]}, \partial{f}_{i,j}^{[2]}, j\in\mathbb{Z}_1^n$ are given in \eqref{err_phi} and \eqref{gradient_f_omega12} for conciseness. The two virtual coordinates for robot $i$ are denoted by $\omega_{i,1}, \omega_{i,2}$, respectively. $\widehat{\Omega}_i:=[\widehat{\omega}_{i,1}, \widehat{\omega}_{i,2}]\t$ denotes the estimates of the target virtual coordinates $\Omega^{\ast}=[\omega^{\ast}_1, \omega^{\ast}_2]\t$.
The inter-robot repulsive terms $\eta_{i,1}, \eta_{i,2}$ are given by 
\begin{align}
\label{de_eta}
\eta_{i,1}=&\sum_{k\in \mathcal N_i(t)}\alpha(\|\Omega_{i,k}\|)\frac{\omega_{i,1}-\omega_{k,1}}{\|\Omega_{i,k}\|},\nonumber\\
\eta_{i,2}=&\sum_{k\in \mathcal N_i(t)}\alpha(\|\Omega_{i,k}\|)\frac{\omega_{i,2}-\omega_{k,2}}{\|\Omega_{i,k}\|},
\end{align}
where $\Omega_{i,k}:=\Omega_i-\Omega_k$ and $\mathcal N_i(t)$ are given in Eq.~\eqref{sening_neighbor}, and $\alpha(s)$ denotes the continuous repulsive function between any two virtual coordinates $\Omega_{i}, \Omega_{k}$. 
Since $\|\Omega_{i,k}(0)\|>r,  \; \forall i\neq k \in {\cal V}$ in Assumption~\ref{assp_intial_value}, the function $\alpha(s):(r, \infty)\rightarrow [0, \infty)$ (see e.g.~\cite{chen2019cooperative})
is designed to satisfy
\begin{align}
\label{alpha}
\alpha(s)=0, \forall s\in[R, \infty),
\lim_{s\rightarrow r^{+}}\alpha(s)=\infty
\end{align}
with the sensing range $R$, the safe range $r$ given in~\eqref{sening_neighbor}, and the right limit $r^{+}$.  
An example of $\alpha(s)$ satisfying~\eqref{alpha} is given by  (see, e.g.~\cite{chen2019cooperative}),
\begin{equation}
\label{exam_poten_func}
\alpha(s)=
\left\{
\begin{array}{llr}
\frac{(s-R)^2}{(s-r)^2}& r<s\leq R,\\
0 & s>R.
\end{array}
\right.
\end{equation}
Here, $\alpha(s)$ is monotonicly decreasing when $s\in(r, R]$ and equals $0$ when $s\in(R, \infty)$, which is locally continuous in the domain $(r, \infty)$.  Then, $\dot{\alpha}(s)$ becomes, 

\begin{equation}
\label{derivative_poten_func}
\dot{\alpha}(s)=
\left\{
\begin{array}{llr}
\frac{2(s-R)(R-r)}{(s-r)^3}& r<s\leq R,\\
0 & s>R,
\end{array}
\right.
\end{equation}
and one has that $\lim_{s\rightarrow R^{-}}\dot{\alpha}(s)=\lim_{s\rightarrow R^{+}}\dot{\alpha}(s)=0$ with $R^{-}, R^{+}$ being the left and right limits of $R$, respectively, which implies that $\dot{\alpha}(s)$ is continuous over the domain $s\in(r, \infty)$ and thus different from the one in \cite{chen2019cooperative}.

\begin{remark}
\label{remark_c3}
When the target virtual coordinates $\Omega^{\ast}(t)$ are only available to some robots, $\widehat{\Omega}_i, i\in \mathcal V,$ in Eq.~\eqref{desired_law} are designed to be the distributed estimators for $\Omega^{\ast}$ with additional communication networks. Since $\Omega^{\ast}(t)$ changes at a constant velocity in Eq.~\eqref{dynamic_virtual_target}, such a problem has been nicely solved by the following two typical approaches in many MAS studies \cite{zhao2013distributed,hong2006tracking} that cannot be overlooked.
 The first one is to broadcast $\Omega^{\ast}(t)$ to every robot via a communication network to achieve finite-time convergence (see, e.g., \cite{zhao2013distributed}), i.e., $\lim_{t\rightarrow T}\big(\widehat{\Omega}_i(t)-\Omega^{\ast}(t)\big)=\mathbf{0}, i\in \mathcal V,$ with a constant $T>0$. The other is to estimate $\Omega^{\ast}$ under exponential convergence, i.e., $\lim_{t\rightarrow\infty}\big(\widehat{\Omega}_i(t)-\Omega^{\ast}(t)\big)=\mathbf{0}, i\in \mathcal V,$
exponentially, see, e.g., \cite{hong2006tracking}. These two approaches are beyond the main scope of this note, but the vanishing estimation errors may lead to a finite escape time phenomenon even for an initially stable nonlinear system~\cite{freeman1995global}. 
\end{remark}

Since the inter-robot repulsive terms $\eta_{i,1}, \eta_{i,2}$ in Eqs.~\eqref{de_eta} contain $\|\Omega_{i,k}\|$ at the denominator, the closed-loop system governed by the CGVF $\chi_i^{hgh}$ in \eqref{desired_law} may not be well defined if $\|\Omega_{i,k}(t)\|=r$ or $\|\Omega_{i,k}(t)\|=0, \forall i\in\mathcal V, k\in\mathcal N_i$ during the navigation process. Accordingly, we divide the main technical results into three steps. 
In Step~1, we prove the uniqueness of solution of the closed-loop system with Eq.~\eqref{desired_law} (i.e., $\|\Omega_{i,k}(t)\|\neq r, \mbox{or}, \|\Omega_{i,k}(t)\|\neq 0, \forall i\in\mathcal V, k\in\mathcal N_i, t\geq0$). 
In Step~2, we prove surface convergence and maneuvering (Objectives 1-2 in Definition~\ref{CPF_definition}). In Step 3, we give the ordering-flexible coordination (Objective~3 in Definition~\ref{CPF_definition}).

\subsection{ The Uniqueness of Solution of Closed-Loop System}
 
Let $\widetilde{\Omega}_i:=[\widetilde{\omega}_{i,1}, \widetilde{\omega}_{i,2}]\t$ be the coordinate errors between the $i$-th virtual coordinates $\Omega_i$ and the target virtual coordinates $\Omega^{\ast}$, where $\widetilde{\omega}_{i,1}:= \omega_{i,1}-\omega_1^{\ast},~\widetilde{\omega}_{i,2}:=\omega_{i,1}-\omega_2^{\ast}.$
Let $F_i^{[1]}:=[\partial{f}_{i,1}^{[1]}, \dots, \partial{f}_{i,n}^{[1]}]\t\in \mathbb{R}^{n}, F_i^{[2]}:=[\partial{f}_{i,1}^{[2]}, \dots, $ $ \partial{f}_{i,n}^{[2]}]\t\in \mathbb{R}^{n}$, $K_i:=\mbox{diag}\{k_{i,1}, $ $\dots, k_{i,n}\}\in \mathbb{R}^{n\times n}$ and $I_n\in \mathbb{R}^{n\times n}$ be the identity matrix. Then, substituting Eq.~\eqref{desired_law} into Eq.~\eqref{dynamic_path} and the derivative of $\widetilde{\omega}_{i,1}, \widetilde{\omega}_{i,2}$ yields the closed-loop system
\begin{align}
\label{dynamic_path_4}
\begin{bmatrix}
   \dot{\Phi}_i\\
   \dot{\widetilde{\omega}}_{i,1}\\
   \dot{\widetilde{\omega}}_{i,2}\\
\end{bmatrix} =& \begin{bmatrix}                      
                       -K_i\Big(I_n+F_i^{[1]}(F_i^{[1]})\t+F_i^{[2]}(F_i^{[2]})\t\Big)\Phi_i \\
                       \Phi_i\t K_iF_i^{[1]}\\
                        \Phi_i\t K_iF_i^{[2]}
                       \end{bmatrix}\nonumber\\
                      & +
                        \begin{bmatrix}   
                         -F_i^{[1]}(\epsilon_{i,1}+e_{i,1})-F_i^{[2]}(\epsilon_{i,2}+e_{i,2})   \\
                       \epsilon_{i,1}+e_{i,1}\\
                       \epsilon_{i,2}+e_{i,2}\\
                       \end{bmatrix},                                
\end{align}
where $\Phi_i$ is given in \eqref{dynamic_path} and 
\begin{align}
\label{replace_coordination}
\epsilon_{i,1}:=&-c_i\widetilde{\omega}_{i,1}+\eta_{i,1},~\epsilon_{i,2}:=-c_i\widetilde{\omega}_{i,2}+\eta_{i,2},\nonumber\\
e_i:=&
\begin{bmatrix}
e_{i,1}\\
 e_{i,2}
 \end{bmatrix}=c_i(\widehat{\Omega}_i-\Omega^{\ast}).
\end{align}
Here, $e_i$ denotes the exponentially vanishing estimation error from the estimation of the target virtual coordinates. Recalling Remark \ref{remark_c3} where $\lim_{t\rightarrow\infty}\widehat{\Omega}_i(t)-\Omega^{\ast}(t)=\mathbf{0}$ exponentially, one has  
\begin{align}
\label{convergence_e}
\lim_{t\rightarrow\infty}e_{i,1}(t)=0, \lim_{t\rightarrow\infty}e_{i,2}(t)=0,
\end{align}
exponentially. The uniqueness of solution of the closed-loop system~\eqref{dynamic_path_4} is guaranteed if the virtual coordinates $\|\Omega_{i,k}\|, i\in\mathcal V, k\in\mathcal N_i$ always stay in
\begin{align}
\label{unique_space}
\mathbb{S} =\{\|\Omega_{i,k}(t)\|\in(r, \infty)\}, \forall t\geq0,
\end{align} 
i.e., $\|\Omega_{i,k}(t)\|>r, \forall t\geq0 \Rightarrow \|\Omega_{i,k}(t)\|\neq r, \mbox{or}, \|\Omega_{i,k}(t)\|$ $\neq 0, \forall t\geq0.$

Next, we give the lemma on the uniqueness of solution.

\begin{lemma}
\label{lemma_bounded_P}
Under Assumptions~\ref{assp_error}-\ref{assp_area}, a multi-robot system governed by \eqref{kinetic_F}, \eqref{desired_law} and \eqref{de_eta} guarantees the uniqueness of solution for $ \forall t\geq 0$, i.e., $\|\Omega_{i,k}(t)\|\neq r, \mbox{or}, \|\Omega_{i,k}(t)\|\neq 0, \forall i\in\mathcal V, k\in\mathcal N_i, t\geq0$. 
\end{lemma}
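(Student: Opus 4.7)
My plan is to invoke a barrier-type Lyapunov argument on the maximal interval of existence. Since the right-hand side of the closed-loop system \eqref{dynamic_path_4} is continuously differentiable on the open set $\mathcal D = \{(\Phi_i,\widetilde{\Omega}_i)_{i\in\mathcal V} : \|\Omega_{i,k}\| > r,\ \forall i\neq k\}$ (the only non-smooth piece being $\alpha(\|\Omega_{i,k}\|)/\|\Omega_{i,k}\|$, which is smooth on this set), Picard--Lindel\"of yields a unique solution on a maximal forward interval $[0,T_{\max})$, with $T_{\max}>0$ because Assumption~\ref{assp_intial_value} places the initial condition strictly inside $\mathcal D$. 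The standard extension criterion tells me that if $T_{\max}<\infty$, the trajectory must approach $\partial\mathcal D$, i.e.\ $\liminf_{t\to T_{\max}^-}\min_{i\neq k}\|\Omega_{i,k}(t)\|=r$ (unboundedness of $\Omega_i$ itself will be ruled out by the same energy).

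Next I would introduce a barrier potential $P:(r,\infty)\to[0,\infty)$ defined by $P(s)=\int_{s}^{R}\alpha(\tau)\,d\tau$ for $s\le R$ and $P(s)=0$ for $s>R$. From \eqref{exam_poten_func} the integrand behaves like $(R-r)^{2}/(\tau-r)^{2}$ near $\tau=r$, so $\lim_{s\to r^{+}} P(s)=+\infty$, while $P$ is $C^{1}$ and bounded on every set $\{s\ge r+\delta\}$. I then propose the candidate
\begin{equation*}
V(t)=\tfrac{1}{2}\sum_{i\in\mathcal V}\bigl(\|\Phi_i\|^2+\|\widetilde{\Omega}_i\|^2\bigr)+\tfrac{1}{2}\sum_{i\in\mathcal V}\sum_{\substack{k\in\mathcal V\\ k\neq i}}P(\|\Omega_{i,k}\|),
\end{equation*}
which is finite at $t=0$ thanks to Assumption~\ref{assp_intial_value}.

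The key calculation is to estimate $\dot V$ along \eqref{dynamic_path_4}. The $\Phi_i$-block contributes the negative-definite term $-\Phi_i^{\mbox{\tiny\sf T}} K_i(I_n+F_i^{[1]}(F_i^{[1]})^{\mbox{\tiny\sf T}}+F_i^{[2]}(F_i^{[2]})^{\mbox{\tiny\sf T}})\Phi_i$ together with cross couplings in $\epsilon_{i,\ell}+e_{i,\ell}$; the $\widetilde{\Omega}_i$-block produces the dissipative term $-c_i\|\widetilde{\Omega}_i\|^2$ coming from the feedback $-c_i\widetilde{\omega}_{i,\ell}$ in \eqref{desired_law}, plus the coupling $\widetilde{\Omega}_i^{\mbox{\tiny\sf T}}\eta_i$; and the barrier contributes $\sum_{k\in\mathcal N_i(t)}P'(\|\Omega_{i,k}\|)\,\tfrac{d}{dt}\|\Omega_{i,k}\|=-\sum_{k}\alpha(\|\Omega_{i,k}\|)\tfrac{\Omega_{i,k}^{\mbox{\tiny\sf T}}\dot\Omega_{i,k}}{\|\Omega_{i,k}\|}$. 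After symmetrizing in $(i,k)\leftrightarrow(k,i)$, the purely repulsive pair-interactions arising through $\eta_{i,\ell}$ in the $\widetilde\Omega$-block and the barrier-derivative term cancel exactly, leaving only dissipative quadratic forms plus linear perturbations driven by $e_i$ and by the bounded partials $\partial f_{i,j}^{[\ell]}$ (bounded by Assumption~\ref{assp_derivative}). Applying Young's inequality gives a differential inequality of the Gr\"onwall form
\begin{equation*}
\dot V(t)\ \le\ \gamma_1\,V(t)+\gamma_2\bigl(\|e(t)\|^{2}+1\bigr),\qquad \gamma_1,\gamma_2>0,
\end{equation*}
with $e(t)=[e_1^{\mbox{\tiny\sf T}},\dots,e_N^{\mbox{\tiny\sf T}}]^{\mbox{\tiny\sf T}}$.

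Since $\|e(t)\|$ is bounded (and in fact exponentially decaying) by \eqref{convergence_e} and Remark~\ref{remark_c3}, Gr\"onwall's lemma keeps $V(t)$ finite on every compact sub-interval of $[0,T_{\max})$; hence each $P(\|\Omega_{i,k}(t)\|)$ stays bounded, which forces $\|\Omega_{i,k}(t)\|$ to stay strictly above $r$ on $[0,T_{\max})$, and in particular strictly above $0$. This contradicts $T_{\max}<\infty$ through the extension criterion, so $T_{\max}=+\infty$ and the solution is unique and well defined for all $t\ge 0$. The main obstacle I anticipate is the bookkeeping in the cancellation between the barrier derivative and the repulsive sum \eqref{de_eta}: I need to organize the double sum symmetrically over unordered neighbor pairs and carefully absorb the $e_i$-driven cross terms into the Gr\"onwall-ready form without picking up any contribution that scales with $\alpha(\|\Omega_{i,k}\|)$ itself (which would destroy the argument near the boundary).
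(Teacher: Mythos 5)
Your proposal is sound in outline and rests on the same mechanism as the paper's proof: a Lyapunov function containing the barrier $\int_{\|\Omega_{i,k}\|}^{R}\alpha(s)\,ds$, which blows up as $\|\Omega_{i,k}\|\to r^{+}$, so that finiteness of $V$ on any finite interval keeps the virtual coordinates strictly above $r$. Your continuation-of-maximal-solution framing is a cleaner packaging of the paper's first-crossing-time contradiction, and the observation that $\alpha$ and $\dot\alpha$ vanish continuously at $R$ (so the switching neighbor set does not break local Lipschitzness on $\mathcal D$) is exactly the right justification for Picard--Lindel\"of. Where you genuinely diverge is the derivative estimate. The paper weights the quadratic blocks by $K_i$ and $c_i$ precisely so that $\partial V/\partial\Omega_i^{\mbox{\tiny\sf T}}=-[\epsilon_{i,1},\epsilon_{i,2}]$ (plus the target part); the cross terms then complete into perfect squares $-(\Phi_i^{\mbox{\tiny\sf T}} K_iF_i^{[\ell]}+\epsilon_{i,\ell}+e_{i,\ell}/2)^2$, giving $\dot V\le \sum_i(e_{i,1}^2+e_{i,2}^2)/4$ with no growth coefficient at all. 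With your unweighted $V$, the cancellation you describe is \emph{not} exact: the $\widetilde\Omega$-block leaves a residue $(1+c_i)\widetilde\Omega_i^{\mbox{\tiny\sf T}}\eta_i$ (and $-\eta_i^{\mbox{\tiny\sf T}} G_i$, $-\eta_i^{\mbox{\tiny\sf T}} e_i$ terms), all linear in $\eta_i$; these are rescued only because the barrier derivative also produces $-\|\eta_i\|^2$, which absorbs them via Young's inequality. So the step you label ``cancel exactly'' should instead read ``are dominated by $-\|\eta_i\|^2$'' --- the argument survives, but only because no leftover term is superlinear in $\alpha$. The second, more substantive cost of your route is hypotheses: to make $\gamma_1$ finite you must bound $\|F_i^{[\ell]}\|$, i.e.\ you invoke Assumption~\ref{assp_derivative}, whereas the lemma is stated (and the paper's square-completion proof goes through) under Assumptions~\ref{assp_error}--\ref{assp_area} only. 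This does not harm Theorem~\ref{theorem_orderingfree}, which assumes all four, but you are proving a strictly weaker version of the lemma than the one stated; if you want to match the paper's hypotheses, adopt the weighted quadratic forms $\tfrac12\Phi_i^{\mbox{\tiny\sf T}} K_i\Phi_i$ and $\tfrac{c_i}{2}\|\widetilde\Omega_i\|^2$ so the $F$-dependent terms are buried inside nonpositive squares.
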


\begin{proof}

Under Assumption~\ref{assp_intial_value}, the initial values of the virtual coordinates satisfy $\|\Omega_{i,k}(0)\|>r, i\in\mathcal V, k\in\mathcal N_i$
for $t=0$. 
We prove by contradiction. 
If the claim in \eqref{unique_space} does not hold for $\forall t>0$, there exists a
finite time $T>0$, such that the states $\|\Omega_{i,k}(t)\|$ stay in $\mathbb{S}$ for $t\in [0,T)$ but not $t=T$, which implies that there exists at least one pair inter-robot virtual coordinates $\Omega_{i,k}$ satisfying 
\begin{align}
\label{err_claim}
\|\Omega_{i,k}(T)\|\in[0, r].
\end{align}
We choose the candidate Lyapunov function 
\begin{align}
\label{V_1}
V(t)=&\sum\limits_{i\in\mathcal{ V}}\bigg\{\frac{\Phi_i\t K_i\Phi_i}{2}+\frac{c_i\widetilde{\Omega}_i\t\widetilde{\Omega}_i}{2}\bigg\}+\sum\limits_{i\in\mathcal{V}}\sum\limits_{k\in\mathcal N_i(t)}\int_{\|\Omega_{i,k}\|}^{R}\alpha(s)ds,
\end{align} 
which is nonnegative and differentiable due to the definition of $\alpha(s)$ in~\eqref{exam_poten_func}. Moreover, it follows from Eqs.~\eqref{de_eta} and \eqref{exam_poten_func} that $V(t)$ approaches $\infty$ if $\|\Omega_{i, k}(t)\|=r$ or $\|\Omega_{i, k}(t)\|=0$.

For $t\in[0, T)$, the states $\Omega_{i,k}$ satisfy $\|\Omega_{i,k}(t)\|\in(r, \infty), \forall i\in\mathcal V, k\in\mathcal N_i$, one has that the closed-loop system~\eqref{dynamic_path_4} is well defined and has the uniqueness of solution on the time interval $[0,T)$, which further implies that $V(t)$ in~\eqref{V_1} is well defined
for $t\in[0, T)$. Then, the partial derivatives of $V(t), t\in[0, T)$ w.r.t. $\Phi_{i}, \Omega^{\ast}, \Omega_i, $ are ${\partial V(t)}/{\partial \Phi_{i}\t}=\Phi_{i}\t K_{i}, {\partial V(t)}/{\partial (\Omega^{\ast})\t}=-c_i\widetilde{\Omega}_i\t, {\partial V(t)}/{\partial \Omega_i\t}=c_i\widetilde{\Omega}_i\t-\sum_{k\in\mathcal N_i(t)}\alpha(\|\Omega_{i,k}\|){\Omega_{i,k}\t}/{\|\Omega_{i,k}\|},$
which implies that the derivative of $V(t), t\in[0, T)$ along the system's trajectory is
\begin{align}
\label{dot_V}
\frac{dV}{dt}
		  =&\sum_{i\in{\cal V}}\bigg\{\Phi_{i}\t K_{i}\bigg(-K_i\Big(I_n+F_i^{[1]}(F_i^{[1]})\t+F_i^{[2]}(F_i^{[2]})\t\Big)\Phi_i  \nonumber\\
		    &-F_i^{[1]}(\epsilon_{i,1}+e_{i,1})-F_i^{[2]}(\epsilon_{i,2}+e_{i,2})\bigg)+\bigg(c_i\widetilde{\Omega}_i\t\nonumber\\
		    &-\sum\limits_{k\in\mathcal N_i(t)}\alpha(\|\Omega_{i,k}\|)\frac{\Omega_{i,k}\t}{\|\Omega_{i,k}\|}\bigg)\dot{\Omega}_i-c_i\widetilde{\Omega}_i\t\dot{\Omega}^{\ast}\bigg\}.
\end{align}
Recalling the fact $\dot{\widetilde{\Omega}}_i=\dot{\Omega}_i-\dot{\Omega}^{\ast}$, one has $c_i\widetilde{\Omega}_i\t\dot{\Omega}_i-c_i\widetilde{\Omega}_i\t\dot{\Omega}^{\ast}=c_i\widetilde{\Omega}_i\t\dot{\widetilde{\Omega}}_i$.
Meanwhile, it follows from the definition of $\alpha(s)$ in \eqref{exam_poten_func} that $\sum_{i\in \mathcal V}\sum_{k\in\mathcal N_i(t)}\alpha(\|\Omega_{i,k}\|)\frac{\Omega_{i,k}\t}{\|\Omega_{i,k}\|}\dot{\Omega}^{\ast}=\dot{\Omega}^{\ast}\sum_{i\in \mathcal V}\sum_{k\in\mathcal N_i(t)}\alpha(\|\Omega_{i,k}\|)\frac{\Omega_{i,k}\t}{\|\Omega_{i,k}\|}=\mathbf{0},$ which immediately leads to $
\sum_{i\in \mathcal V}$ $\sum_{k\in\mathcal N_i(t)}\alpha(\|\Omega_{i,k}\|)\frac{\Omega_{i,k}\t}{\|\Omega_{i,k}\|}\dot{\Omega}_i
=\sum_{i\in{\cal V}}$ $\sum_{k\in\mathcal N_i(t)}$ $\alpha(\|\Omega_{i,k}\|)\frac{\Omega_{i,k}\t}{\|\Omega_{i,k}\|}\dot{\widetilde{\Omega}}_i.
$
Then, one has that
$\sum_{i\in{\cal V}}\{(c_i\widetilde{\Omega}_i\t-\sum_{k\in\mathcal N_i(t)}\alpha(\|\Omega_{i,k}\|)\frac{\Omega_{i,k}\t}{\|\Omega_{i,k}\|})\dot{\Omega}_i-c_i\widetilde{\Omega}_i\t\dot{\Omega}^{\ast}\}
=\sum_{i\in{\cal V}}(c_i\widetilde{\Omega}_i\t-\sum_{k\in\mathcal N_i(t)}\alpha(\|\Omega_{i,k}\|)\frac{\Omega_{i,k}\t}{\|\Omega_{i,k}\|})\dot{\widetilde{\Omega}}_i.$

From the definitions of $\Omega_{i,k}, \widetilde{\Omega}_i$ in Eqs.~\eqref{de_eta} and~\eqref{dynamic_path_4}, it follows from Eq.~\eqref{replace_coordination} that $\sum_{i\in{\cal V}}(c_i\widetilde{\Omega}_i\t-\sum_{k\in\mathcal N_i(t)}\alpha(\|\Omega_{i,k}\|)\frac{\Omega_{i,k}\t}{\|\Omega_{i,k}\|})\dot{\widetilde{\Omega}}_i=-\epsilon_{i,1}\dot{\widetilde{\omega}}_{i,1}-\epsilon_{i,2}\dot{\widetilde{\omega}}_{i,2}$.
Combining with $\dot{\widetilde{\omega}}_{i,1}, \dot{\widetilde{\omega}}_{i,2}$ in \eqref{dynamic_path_4}, it follows from Eq.~\eqref{dot_V} that
\begin{align}
\label{dot_V1}
\frac{dV}{dt}
                     =&\sum_{i\in{\cal V}}\bigg\{-\Phi_{i}\t K_{i}\t K_i\Phi_i-\beta_{i,1}-\beta_{i,2}\bigg\}
\end{align}
with  
$
\beta_{i,1}:=-\Phi_{i}\t K_{i}\t F_i^{[1]}(F_i^{[1]})\t K_i\Phi_i-\epsilon_{i,1}^2-\Phi_{i}\t K_{i}F_i^{[1]}e_{i,1}-2\Phi_{i}\t K_{i}F_i^{[1]}\epsilon_{i,1}-\epsilon_{i,1}e_{i,1}$, and
$\beta_{i,2}:=-\Phi_{i}\t K_{i}\t F_i^{[2]}(F_i^{[2]})\t K_i\Phi_i$ $-\epsilon_{i,2}^2-\Phi_{i}\t K_{i}F_i^{[2]}e_{i,2}-2\Phi_{i}\t K_{i}F_i^{[2]}\epsilon_{i,2}-\epsilon_{i,2}e_{i,2}.
$

Using the definition of $F_i^{[1]}, \Phi_i, K_i$ in \eqref{dynamic_path_4}, one has that $(F_i^{[1]})\t K_i\Phi_i=\Phi_{i}\t K_{i}\t F_i^{[1]}$ is a scalar, which implies that
$\beta_{i,1}=-(\Phi_{i}\t K_iF_i^{[1]}+\epsilon_{i,1}+{e_{i,1}}/{2})^2+{e_{i,1}^2}/{4}.$

Analogously, using the fact $(F_i^{[2]})\t K_i\Phi_i=\Phi_{i}\t K_{i}\t F_i^{[2]}$, one has $\beta_{i,2}=-(\Phi_{i}\t K_iF_i^{[2]}+\epsilon_{i,2}+{e_{i,2}}/{2})^2+{e_{i,2}^2}/{4}.$
Then, Eq.~\eqref{dot_V1} becomes
\begin{align}
\label{dot_V2}
\frac{dV}{dt}
		  =&-\sum_{i\in{\cal V}}\bigg\{\Phi_{i}\t K_iK_i\Phi_{i}+a_i^2+b_i^2\bigg\}+\sum_{i\in{\cal V}}d_i
\end{align}
with 
\begin{align}
\label{replace_val}
a_i:=&\Phi_{i}\t K_iF_i^{[1]}+\epsilon_{i,1}+\frac{e_{i,1}}{2},~ b_i:=\Phi_{i}\t K_iF_i^{[2]}+\epsilon_{i,2}+\frac{e_{i,2}}{2},\nonumber\\
d_i:=&\frac{e_{i,1}^2+e_{i,2}^2}{4}.
\end{align}
Denoting 
\begin{align}
\label{value_replace}
\Xi:=&-\sum_{i\in{\cal V}}\bigg\{\Phi_{i}\t K_iK_i\Phi_{i}+a_i^2+b_i^2\bigg\},
\end{align}
one has $\int_{0}^t\Xi(s)ds\leq 0, t\in[0, T)$,
because of $\Phi_{i}\t K_iK_i\Phi_{i}\geq0, a_i^2\geq0, b_i^2\geq0$.
From the condition~\eqref{convergence_e} and the definition of $d_i$ in \eqref{replace_val}, there exists a real numbers $\delta_1>0$ such that $\sum_{i\in{\cal V}}^nd_i(t)<\delta_1$ for $t\in[0,T]$.  
Then, it follows from Eqs.~\eqref{dot_V2}, \eqref{value_replace} that ${dV}/{dt}\leq\Xi+\delta_1$.
Using the comparison principle~\cite{khalil2002nonlinear}, one has that $V(T)\leq\int_{0}^T\Xi(s)ds+\delta_1T+V(0)\leq\delta_1T+V(0)$.

Under Assumption~\ref{assp_intial_value}, one has that $V(0)$ is bounded. 
Meanwhile, the constant $\delta_1T$ is upper bounded, so is $V(T)$. It then follows from~Eq.~\eqref{V_1} that $\sum_{i\in\mathcal{V}}\sum_{k\in\mathcal N_i(T)}\int_{\|\Omega_{i,k}(T)\|}^{R}\alpha(s)ds$ is bounded as well.
Using the fact that $\lim_{\|\Omega_{i,k}\|\rightarrow r^{+}} \alpha(\|\Omega_{i,k}\|)=\infty,  \forall i\neq k\in {\cal V}$,
with $r^{+}$ given in \eqref{alpha}, it can be concluded that $\|\Omega_{i,k}(T)\|$ stays in $\mathbb{S}$ in \eqref{unique_space} because of the continuity of $V(t)$, i.e., $\|\Omega_{i,k}(T)\|>r$. This contradicts Eq.~\eqref{err_claim}. So we conclude that there is no such a finite $T$ for \eqref{err_claim} (i.e., $T=\infty$) and  $\|\Omega_{i,k}(t)\|\in(r, \infty),  i\in\mathcal V, k\in\mathcal N_i, \forall t \geq 0$. It further guarantees that 
$\|\Omega_{i,k}(t)\|\neq r, \mbox{or}, \|\Omega_{i,k}(t)\|\neq 0, \forall i\in\mathcal V, k\in\mathcal N_i, t\geq0$, and the proof is thus completed.
\end{proof}

\subsection{ Surface Convergence and Maneuvering}

\begin{lemma}
\label{lemma_step_2}
Under Assumption~\ref{assp_derivative},  a multi-robot system~\eqref{kinetic_F} with~the CGVF~\eqref{desired_law}, \eqref{de_eta}
achieves surface convergence and maneuvering on the desired surface $\mathcal S^{phy}$, i.e., Objectives 1-2 in Definition~\ref{CPF_definition}.
\end{lemma}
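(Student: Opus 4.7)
The plan is to build on Lemma \ref{lemma_bounded_P}, which already guarantees $\|\Omega_{i,k}(t)\|>r$ for all $t\ge 0$ so that the closed-loop system \eqref{dynamic_path_4} is well-defined and generates a unique forward-complete trajectory. I would reuse the same Lyapunov function $V(t)$ from \eqref{V_1} and the computation that culminated in \eqref{dot_V2}, i.e.\ $\dot V = \Xi + \sum_{i\in\mathcal V}d_i$ with $\Xi\le 0$ given by \eqref{value_replace} and $d_i=(e_{i,1}^2+e_{i,2}^2)/4$. The first step is a global boundedness argument: since $e_{i,1},e_{i,2}\to 0$ exponentially by \eqref{convergence_e}, the disturbance $\sum d_i(t)$ is integrable on $[0,\infty)$, so $V(t)\le V(0)+\int_0^\infty\!\sum_i d_i(s)\,ds<\infty$ for all $t$. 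Hence $\Phi_i$, $\widetilde\Omega_i$ and each $\int_{\|\Omega_{i,k}\|}^{R}\alpha(s)ds$ remain bounded, and in particular $\|\Omega_{i,k}(t)\|$ stays uniformly bounded away from $r$.

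Next, rearranging $\dot V\le \Xi+\sum_i d_i$ gives $\int_0^t(-\Xi(s))\,ds\le V(0)-V(t)+\int_0^\infty\!\sum_i d_i<\infty$, so the nonnegative quantity $-\Xi=\sum_i\bigl(\Phi_i^{\mbox{\tiny\sf T}}K_i^2\Phi_i+a_i^2+b_i^2\bigr)$ is integrable over $[0,\infty)$. To invoke Barbalat's lemma I need $-\Xi$ to be uniformly continuous in $t$, which follows from showing $\dot\Xi$ is bounded. Here is where Assumption \ref{assp_derivative} is essential: boundedness of $\partial f_{i,j}^{[1]},\partial f_{i,j}^{[2]}$ and their second derivatives, together with the already established boundedness of $\Phi_i$, $\widetilde\Omega_i$, $e_i$, and of $\eta_{i,1},\eta_{i,2}$ (which are bounded because $\|\Omega_{i,k}\|$ is bounded away from $r$), implies the right-hand side of \eqref{dynamic_path_4} is bounded; differentiating $a_i$, $b_i$ and $\Phi_i^{\mbox{\tiny\sf T}}K_i^2\Phi_i$ and bounding each term via the chain rule then yields $|\dot\Xi|<\infty$. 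Barbalat's lemma therefore gives
\begin{equation*}
\Phi_i(t)\to 0,\qquad a_i(t)\to 0,\qquad b_i(t)\to 0,\qquad \forall i\in\mathcal V,
\end{equation*}
as $t\to\infty$. The convergence $\Phi_i\to 0$ is precisely Objective~1 (surface convergence).

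For Objective~2 I would then read off the maneuvering velocity from \eqref{desired_law}. Using $c_i(\omega_{i,\ell}-\widehat\omega_{i,\ell})=c_i\widetilde\omega_{i,\ell}-e_{i,\ell}$, the closed-loop dynamics can be rewritten as $\dot\omega_{i,\ell}=(-1)^n+\sum_j k_{i,j}\phi_{i,j}\partial f_{i,j}^{[\ell]}+\epsilon_{i,\ell}+e_{i,\ell}$ for $\ell=1,2$. Since $\Phi_i\to 0$ and $e_{i,\ell}\to 0$, the fact that $a_i=\Phi_i^{\mbox{\tiny\sf T}}K_iF_i^{[1]}+\epsilon_{i,1}+e_{i,1}/2\to 0$ and $b_i\to 0$ forces $\epsilon_{i,1},\epsilon_{i,2}\to 0$ (here Assumption \ref{assp_derivative} is again used to keep $F_i^{[\ell]}$ bounded). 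Consequently
\begin{equation*}
\dot\Omega_i(t)\;\longrightarrow\;[(-1)^n,\,(-1)^n]^{\mbox{\tiny\sf T}}\;=\;\dot\Omega^{\ast},\qquad\forall i\in\mathcal V,
\end{equation*}
which is a common, non-zero velocity shared by all robots; this delivers the maneuvering clause $\lim_{t\to\infty}\dot\Omega_i=\lim_{t\to\infty}\dot\Omega_k\ne\mathbf 0$.

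The main obstacle I anticipate is the Barbalat step, specifically verifying uniform continuity of $\Xi$: several terms in $\dot\Xi$ involve time derivatives of $\eta_{i,1},\eta_{i,2}$ through $d/dt(1/\|\Omega_{i,k}\|)$ and of $\alpha(\|\Omega_{i,k}\|)$, so one must carefully exploit the strict lower bound $\|\Omega_{i,k}\|>r+\epsilon$ (which comes from the boundedness of the $\alpha$-integrals in $V$) together with Assumption \ref{assp_derivative} to produce a uniform bound. Once that is in place, the rest of the argument is essentially algebraic substitution.
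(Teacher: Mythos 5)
Your proposal is correct and follows essentially the same route as the paper: the same Lyapunov function $V$ and the identity $\dot V=\Xi+\sum_i d_i$, boundedness of $V$ from the integrability of the exponentially vanishing $d_i$, Barbalat's lemma on $\Xi$ (with Assumption~\ref{assp_derivative} securing uniform continuity), and then the algebraic read-off $a_i,b_i,\Phi_i\to 0\Rightarrow\epsilon_{i,\ell}\to 0\Rightarrow\dot\Omega_i\to\dot\Omega^{\ast}\neq\mathbf 0$. Your closing remark about bounding $\dot\eta_{i,\ell}$ via the uniform separation $\|\Omega_{i,k}\|>r+\epsilon$ obtained from the boundedness of the $\alpha$-integrals is in fact a point the paper's proof passes over rather quickly, so that extra care is welcome rather than a deviation.
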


\begin{proof}
\label{proof_step2}
First of all, recalling the definition of $\Xi$ in Eq.~\eqref{value_replace}, one has that
$\int_{0}^t\Xi(s)ds$ is monotone. It then follows from Eq.~\eqref{dot_V2} that $\int_{0}^t\Xi(s)ds\geq V(t)-\sum_{i\in{\cal V}}\int_{0}^td_i(s)ds-V(0), \forall t>0.$
From the fact that $\lim_{t\rightarrow\infty}d_i(t)=0$ exponentially with Eqs.~\eqref{convergence_e} and \eqref{replace_val}, it then holds that $\sum_{i\in{\cal V}}\int_{0}^td_i(s)ds$ is bounded.
Moreover, as $V(0)$ and $V(t), \forall t>0$ are all bounded in view of Lemma~\ref{lemma_bounded_P},
one has that $\int_{0}^t\Xi(s)ds$ is lower bounded, which then implies that $\int_{0}^t\Xi(s)ds$ has a finite limit as $t\rightarrow\infty$.

Bearing in mind that $V(t)$ is bounded in Lemma~\ref{lemma_bounded_P}, it follows from Eq.~\eqref{V_1} that $\Phi_i, {\Omega}_{i,k}, \widetilde{\Omega}_i, \eta_{i,1}, \eta_{i,2}$ are bounded as well. 
With the bounded first and second derivatives of $f_{i,j}(\omega_{i,1}, \omega_{i,2}), i\in{\cal V}, j\in\mathbb{Z}_1^n$ in Assumption~\ref{assp_derivative}, it follows from Eqs.~\eqref{dynamic_path_4},~\eqref{replace_val} and \eqref{value_replace} that the components of $\Phi_i, F_i^{[1]}, F_i^{[2]}, \dot{F}_i^{[1]}, \dot{F}_i^{[2]}, \eta_{i,1},  \eta_{i,2}, \dot{\eta}_{i,1},  \dot{\eta}_{i,2}, e_{i,1}, e_{i,2}, \dot{e}_{i,1}, \dot{e}_{i,2}$ in $\dot{\Xi}$ are all bounded as well, which implies that $\Xi$ is uniformly
continuous in $t$. Then, from Barbalat’s lemma~\cite{khalil2002nonlinear}, one has that $\lim_{t\rightarrow\infty}\Xi(t)=0.$
Since $a_i^2\geq0, b_i^2\geq0, \Phi_{i}\t K_iK_i\Phi_{i}\geq0, k_{i,j}>0, i\in\mathcal V, j\in\mathbb{Z}_1^n$ in Eq.~\eqref{value_replace}, 
one has $\lim_{t\rightarrow\infty} a_i(t)= 0, \lim_{t\rightarrow\infty} b_i(t)= 0, \lim_{t\rightarrow\infty} \Phi_{i}(t)=\mathbf{0},$
which implies that $\lim_{t\rightarrow\infty} \phi_{i,j}(p_i(t)) =0, i\in\mathcal V, j\in\mathbb{Z}_1^n$, i.e., surface convergence in Definition~\ref{CPF_definition} is achieved. 
Since $F_i^{[1]}, F_i^{[2]}$ are bounded under Assumption~\ref{assp_derivative}, one has that 
\begin{align}
\label{convergence_Phi}
& \lim_{t\rightarrow\infty}\Phi_i\t(t) K_iF_i^{[1]}(t)=0,~\lim_{t\rightarrow\infty}\Phi_i\t(t) K_iF_i^{[2]}(t)=0,  i\in\mathcal V.
\end{align}
Meanwhile, from the fact that $\lim_{t\rightarrow\infty}e_{i,1}(t)=0, \lim_{t\rightarrow\infty}$ $e_{i,2}(t)=0$ in \eqref{convergence_e}, it follows from Eqs.~\eqref{replace_val} and \eqref{convergence_Phi} that
\begin{align}
\label{convergence_coordinated_term}
& \lim_{t\rightarrow\infty}\epsilon_{i,1}(t)=0,~\lim_{t\rightarrow\infty}\epsilon_{i,2}(t)=0,  i\in\mathcal V.
\end{align}
Combining \eqref{convergence_e}, \eqref{convergence_Phi} and \eqref{convergence_coordinated_term} together, one has that the derivative of $\widetilde{\omega}_{i,1}, \widetilde{\omega}_{i,2}$ in \eqref{dynamic_path_4} satisfies $\lim_{t\rightarrow\infty}\dot{\widetilde{\omega}}_{i,1}(t)=0,~\lim_{t\rightarrow\infty}\dot{\widetilde{\omega}}_{i,2}(t)=0, ~i\in\mathcal V.$
Then, it holds that $\lim_{t\rightarrow\infty}\big(\dot{\widetilde{\Omega}}_i(t)-\dot{\widetilde{\Omega}}_k(t)\big)=0, \forall i\neq k\in \mathcal{V}$,
which implies that $\lim_{t\rightarrow\infty}\dot{\Omega}_i(t)=\lim_{t\rightarrow\infty}\dot{\Omega}_k(t)=\lim_{t\rightarrow\infty}$ $\dot{\Omega}^{\ast}(t)\neq\mathbf{0}$, i.e., surface maneuvering in Definition~\ref{CPF_definition} is achieved.
The proof is thus completed.
\end{proof}

\subsection{Ordering-Flexible Coordination}
\begin{lemma}
\label{lemma_step_3}
A multi-robot system composed of \eqref{kinetic_F} and~\eqref{desired_law} achieves ordering-flexible motion coordination on the desired surface $\mathcal S^{phy}$, i.e., Objective 3 in Definition~\ref{CPF_definition}.
\end{lemma}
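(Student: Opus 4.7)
The plan is to leverage the limits established in Lemma~\ref{lemma_step_2} together with structural (anti)symmetry of the inter-robot repulsive terms, and to read off Condition~(b) almost directly from Lemma~\ref{lemma_bounded_P} and the definition of the sensing neighborhood.

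First I would dispose of Condition~(b). The upper bound $\|\Omega_{i}(t)-\Omega_{k}(t)\|<R$ for every $k\in\mathcal{N}_i(t)$ is tautological from the definition of $\mathcal{N}_i(t)$ in Eq.~\eqref{sening_neighbor}. The lower bound $\|\Omega_{i,k}(t)\|>r$ is exactly the conclusion of Lemma~\ref{lemma_bounded_P}, which showed $\|\Omega_{i,k}(t)\|\in(r,\infty)$ for all $t\geq 0$ and in particular in the limit. So Condition~(b) follows with no further work.

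The substantive step is Condition~(a). I would start from the definition $\epsilon_{i,l}=-c_i\widetilde{\omega}_{i,l}+\eta_{i,l}$ in Eq.~\eqref{replace_coordination}, sum over $i\in\mathcal{V}$, and then exploit the fact that the neighborhood relation induced by $\|\Omega_{i,k}\|$ is symmetric, namely $k\in\mathcal{N}_i(t)\iff i\in\mathcal{N}_k(t)$, since $\|\Omega_{i,k}\|=\|\Omega_{k,i}\|$. Under this symmetry, for each unordered pair $(i,k)$ with $k\in\mathcal{N}_i(t)$ the two contributions
\[
\alpha(\|\Omega_{i,k}\|)\frac{\omega_{i,l}-\omega_{k,l}}{\|\Omega_{i,k}\|}
\quad\text{and}\quad
\alpha(\|\Omega_{k,i}\|)\frac{\omega_{k,l}-\omega_{i,l}}{\|\Omega_{k,i}\|}
\]
cancel, whence $\sum_{i\in\mathcal{V}}\eta_{i,l}\equiv 0$ for $l\in\mathbb{Z}_1^2$. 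Therefore $\sum_{i\in\mathcal{V}}\epsilon_{i,l}=-\sum_{i\in\mathcal{V}}c_i\widetilde{\omega}_{i,l}$. Invoking $\lim_{t\to\infty}\epsilon_{i,l}(t)=0$ from Eq.~\eqref{convergence_coordinated_term} of Lemma~\ref{lemma_step_2}, and assuming the control gain $c_i=c>0$ is common across the robots (the natural setting of Eq.~\eqref{desired_law}), one obtains $\lim_{t\to\infty}\sum_{i\in\mathcal{V}}\widetilde{\omega}_{i,l}(t)=0$, i.e.
\[
\lim_{t\to\infty}\Big(\tfrac{1}{N}\sum_{i=1}^N\Omega_i(t)-\Omega^{\ast}(t)\Big)=\mathbf{0},
\]
which is exactly Condition~(a).

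The main obstacle will be making the antisymmetric-cancellation argument airtight in the presence of the time-varying, switching neighborhood set $\mathcal{N}_i(t)$: one must argue that even at instants when a pair $(i,k)$ transitions across the sensing boundary $\|\Omega_{i,k}\|=R$, the cancellation still holds. This is where the smooth design of $\alpha(\cdot)$ in Eq.~\eqref{exam_poten_func} (with $\alpha(R)=0$ and $\dot{\alpha}(R^\pm)=0$) pays off, because it allows us to extend the double sum $\sum_i\sum_{k\in\mathcal{N}_i(t)}$ to an unrestricted double sum $\sum_i\sum_{k\neq i}$ without altering its value, and then the pairwise cancellation is purely algebraic. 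A secondary subtlety is that Condition~(a) concerns an unweighted centroid, so if nonuniform $c_i$ were permitted one would only recover a weighted centroid; this justifies the implicit assumption of a common $c$ in Eq.~\eqref{desired_law}.
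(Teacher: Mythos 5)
Your proof is correct and follows essentially the same route as the paper's: Condition~(b) is read off directly from Lemma~\ref{lemma_bounded_P} and the definition of $\mathcal{N}_i(t)$ in Eq.~\eqref{sening_neighbor}, and Condition~(a) is obtained by summing $\epsilon_{i,l}\to 0$ over $i$ and cancelling the antisymmetric repulsion terms so that $\sum_{i}\eta_{i,l}\equiv 0$. Your two added remarks --- that the pairwise cancellation survives the switching neighborhood sets because $\alpha(R)=0$, and that uniform gains $c_i\equiv c$ are implicitly needed to pass from $\lim_{t\to\infty}\sum_{i}c_i\widetilde{\omega}_{i,l}(t)=0$ to the unweighted centroid in Condition~(a) --- are both points the paper leaves tacit.
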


\begin{proof}
From Eq.~\eqref{convergence_coordinated_term} in Lemma~\ref{lemma_step_2}, it follows from the definition of $\epsilon_{i,1}, \epsilon_{i,2}$ in \eqref{replace_coordination} that 
$\lim_{t\rightarrow\infty}\sum_{i=1}^N\big\{-c_i\widetilde{\omega}_{i,1}(t)+\eta_{i,1}(t)\big\}=0, \lim_{t\rightarrow\infty}\sum_{i=1}^N\big\{-c_i\widetilde{\omega}_{i,2}(t)+\eta_{i,2}(t)\big\}=0,  i\in\mathcal V$.
Using the fact that $\sum_{i=1}^N\eta_{i,1}=0, \sum_{i=1}^N\eta_{i,2}=0,$
one has that $\lim_{t\rightarrow\infty}\sum_{i=1}^N\widetilde{\Omega}(t)=\mathbf{0},$
which implies that
\begin{align*}
\lim_{t\rightarrow\infty}\bigg(\frac{1}{N}\sum_{i=1}^N\Omega_{i}(t)-\Omega_{\ast}(t)\bigg)=\lim_{t\rightarrow\infty}\frac{\sum_{i=1}^N\widetilde{\Omega}_i(t)}{N}=\mathbf{0}.
\end{align*}
Condition (a) of the ordering-flexible coordination in Definition~\ref{CPF_definition} is thus proved.
Next, we will prove Condition (b) of the ordering-flexible motion coordination in Definition~\ref{CPF_definition}, i.e., $ r<\lim_{t\rightarrow\infty}\|\Omega_{i, k}(t)\|<R,  \forall i\in{\cal V}, k\in \mathcal N_i(t).$
The first inequality is proved first. From Lemma~\ref{lemma_bounded_P}, one has that $\|\Omega_{i,k}(t)\|\in(r, \infty),  i\in\mathcal V, k\in\mathcal N_i, \forall t \geq 0$ holds, which implies the left-hand side of Condition (b) also holds, i.e.,  $\lim_{t\rightarrow\infty}\|\Omega_{i, k}(t)\|>r, i\in\mathcal V, k \in \mathcal N_i(t).$
Moreover, the second inequality $\lim_{t\rightarrow\infty}\|\Omega_{i, k}(t)\|<R, i\in\mathcal V, k \in \mathcal N_i(t)$ holds as well because of the definition of the sensing neighbors $\mathcal N_i(t)=\{k\in {{\cal V}},k\neq i \;\big|~ \| \Omega_{i,k}(t)\| <R\}$ in Eq.~\eqref{sening_neighbor}. Then, Conditions (a), (b) in Definition~\ref{CPF_definition} are both satisfied, which completes the proof.
\end{proof}

\subsection{Ordering-Flexible Multi-Robot Surface Navigation}

\begin{theorem}
\label{theorem_orderingfree}
Under Assumptions~\ref{assp_error}-\ref{assp_derivative}, a multi-robot system governed by \eqref{kinetic_F} and~\eqref{desired_law} is able to
achieve the ordering-flexible multi-robot {\it surface navigation}, i.e.,  Objectives 1-3 in Definition~\ref{CPF_definition}.
\end{theorem}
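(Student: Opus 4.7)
The plan is to assemble Theorem~\ref{theorem_orderingfree} directly from the three preceding lemmas, since the theorem simply states that Objectives~1--3 of Definition~\ref{CPF_definition} are all accomplished simultaneously, and these have already been split across Lemmas~\ref{lemma_bounded_P}--\ref{lemma_step_3}. First I would verify that the hypotheses match: Assumptions~\ref{assp_error}--\ref{assp_area} are precisely what Lemma~\ref{lemma_bounded_P} requires, Assumption~\ref{assp_derivative} is what Lemma~\ref{lemma_step_2} requires, and Lemma~\ref{lemma_step_3} inherits its setting from Lemma~\ref{lemma_step_2}. So all the invoked hypotheses of Theorem~\ref{theorem_orderingfree} are consistent with each lemma's invocation.

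Next, I would chain the conclusions in the logical order imposed by their mutual dependencies. Lemma~\ref{lemma_bounded_P} comes first, guaranteeing that $\|\Omega_{i,k}(t)\|\in(r,\infty)$ for all $t\ge 0$, which means the closed-loop dynamics~\eqref{dynamic_path_4} with the repulsive terms $\eta_{i,1},\eta_{i,2}$ is well posed with a unique forward-complete solution; without this, the Lyapunov argument and Barbalat-type step used inside Lemma~\ref{lemma_step_2} would be vacuous. Then Lemma~\ref{lemma_step_2} supplies $\lim_{t\to\infty}\phi_{i,j}(p_i(t))=0$ (Objective~1) and $\lim_{t\to\infty}\dot{\Omega}_i(t)=\lim_{t\to\infty}\dot{\Omega}_k(t)\neq\mathbf{0}$ (Objective~2). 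Finally, Lemma~\ref{lemma_step_3}, which itself uses \eqref{convergence_coordinated_term} from the proof of Lemma~\ref{lemma_step_2}, delivers the two parts of Objective~3: the average-to-target convergence $\lim_{t\to\infty}\bigl(\frac{1}{N}\sum_{i=1}^N\Omega_i(t)-\Omega^{\ast}(t)\bigr)=\mathbf{0}$, and the inter-robot separation $r<\lim_{t\to\infty}\|\Omega_{i,k}(t)\|<R$ for $k\in\mathcal N_i(t)$, where the upper bound uses the very definition of $\mathcal N_i(t)$ in \eqref{sening_neighbor} and the lower bound is a direct corollary of Lemma~\ref{lemma_bounded_P}.

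Since all three lemmas have already been proved, no new analytical obstacle remains; the proof of Theorem~\ref{theorem_orderingfree} is essentially a two- or three-line citation. The only thing to be careful about is the implicit interleaving: Lemma~\ref{lemma_step_2} is proved on the maximal interval of existence, and it is Lemma~\ref{lemma_bounded_P} that upgrades this to $t\in[0,\infty)$ by ruling out finite-time blow-up of $V(t)$ through collision ($\|\Omega_{i,k}\|\to 0$) or contact with the safety boundary ($\|\Omega_{i,k}\|\to r^{+}$). Thus the only step that deserves an explicit remark in the write-up is making sure to invoke Lemma~\ref{lemma_bounded_P} before invoking Lemmas~\ref{lemma_step_2} and~\ref{lemma_step_3}, so that the asymptotic statements are meaningful. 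After that, a one-sentence concatenation suffices to conclude that Objectives~1--3 of Definition~\ref{CPF_definition} all hold, hence the ordering-flexible multi-robot surface navigation task is accomplished.
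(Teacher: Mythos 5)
Your proposal matches the paper's proof exactly: the paper simply states that the conclusion follows directly from Lemmas~\ref{lemma_bounded_P}--\ref{lemma_step_3}, which is precisely the concatenation you describe. Your additional care about invoking Lemma~\ref{lemma_bounded_P} first so that the solution is forward-complete before the asymptotic claims are made is a sound (and slightly more explicit) rendering of the same argument.
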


\begin{proof}
The conclusion follows directly from Lemmas \ref{lemma_bounded_P}-\ref{lemma_step_3}.
\end{proof}

\begin{figure}[!htb]
  \centering
  \includegraphics[width=\hsize]{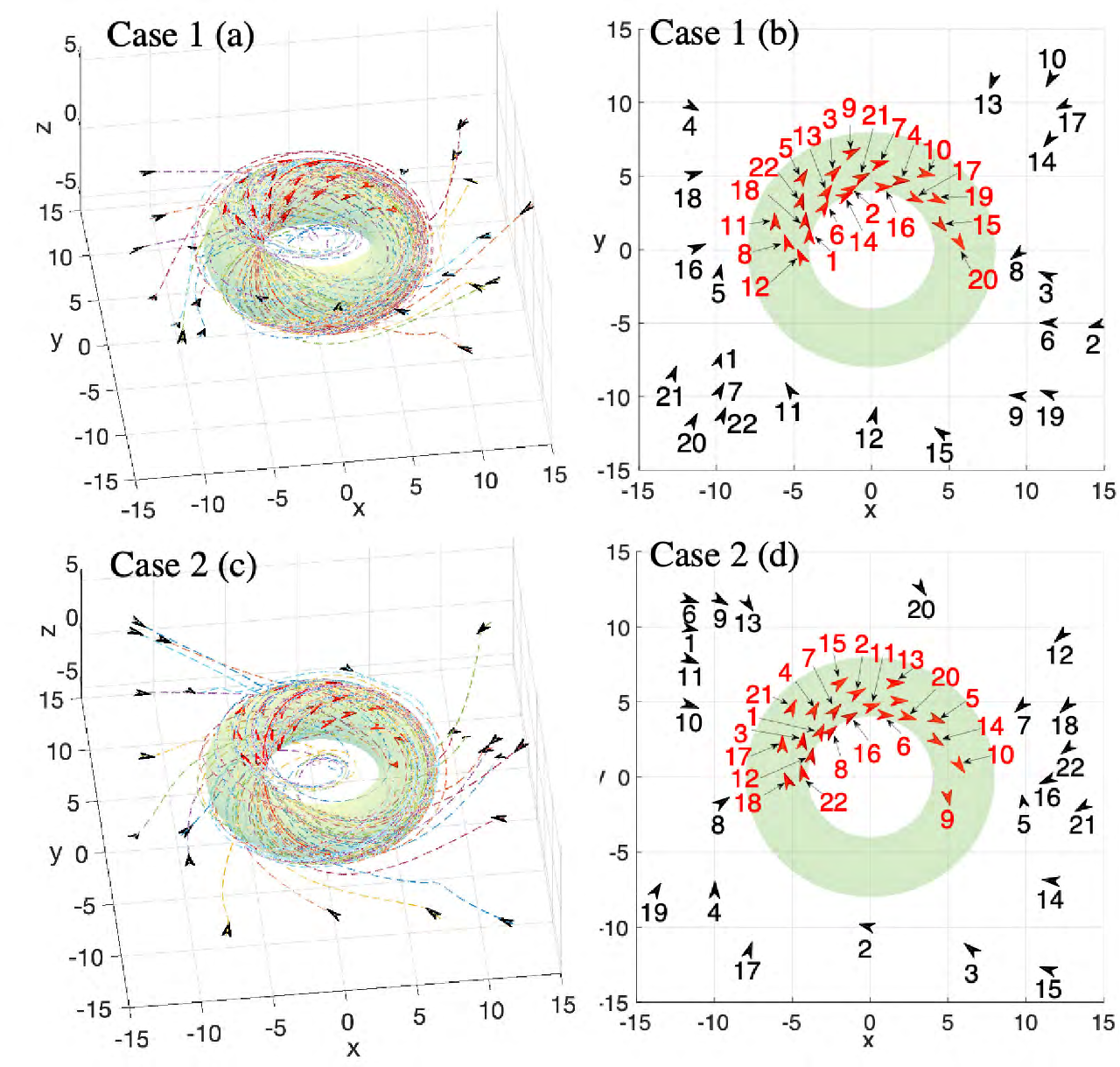}
  \caption{ (a)-(d) Two complex cases of 22-robot {\it surface navigation} tasks on a desired torus surface with different initial states. Subfigures (a), (c): Moving trajectories of the 22 robots with the CGVF~\eqref{desired_law}. Subfigures (b), (d): Top view of the initial and final positions of the robots to show the ordering-flexible coordination.
   (Here, the black and red arrows denote the initial and final positions of the robots, respectively. The dashed lines represent the trajectories of the robots. The green surface is the desired torus surface).}
  \label{20_torus_trajectories}
\end{figure}

\begin{figure}[!htb]
  \centering
  \includegraphics[width=7.6cm]{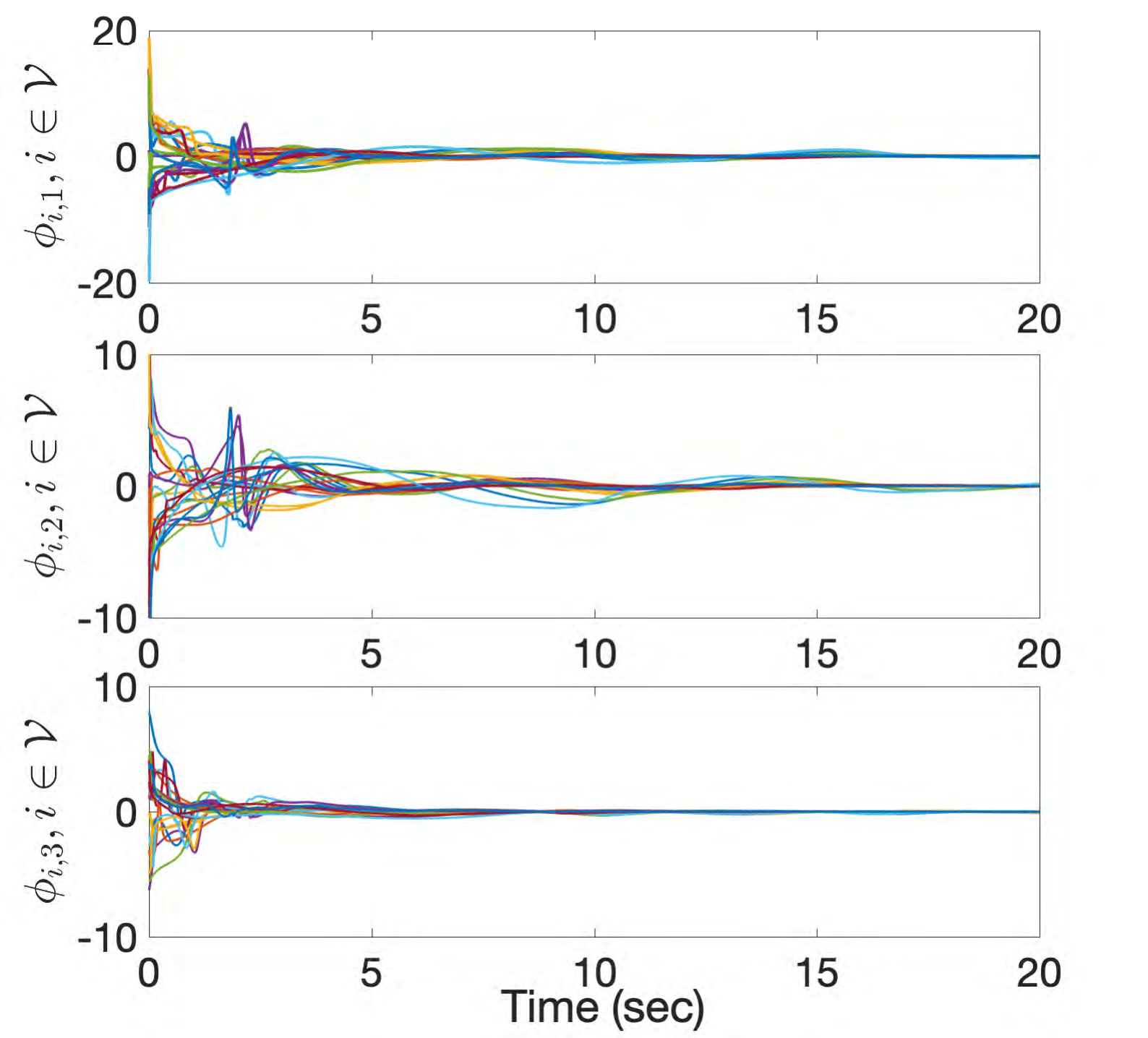}
  \caption{ Temporal evolution of the position errors $\phi_{i,1}, \phi_{i,2}, \phi_{i,3}, i\in\mathbb{Z}_1^{22}$ in Fig.~\ref{20_torus_trajectories} (Case 1).}
  \label{20_torus_trajectories_errors}
\end{figure}

\begin{figure}[!htb]
  \centering
  \includegraphics[width=7.6cm]{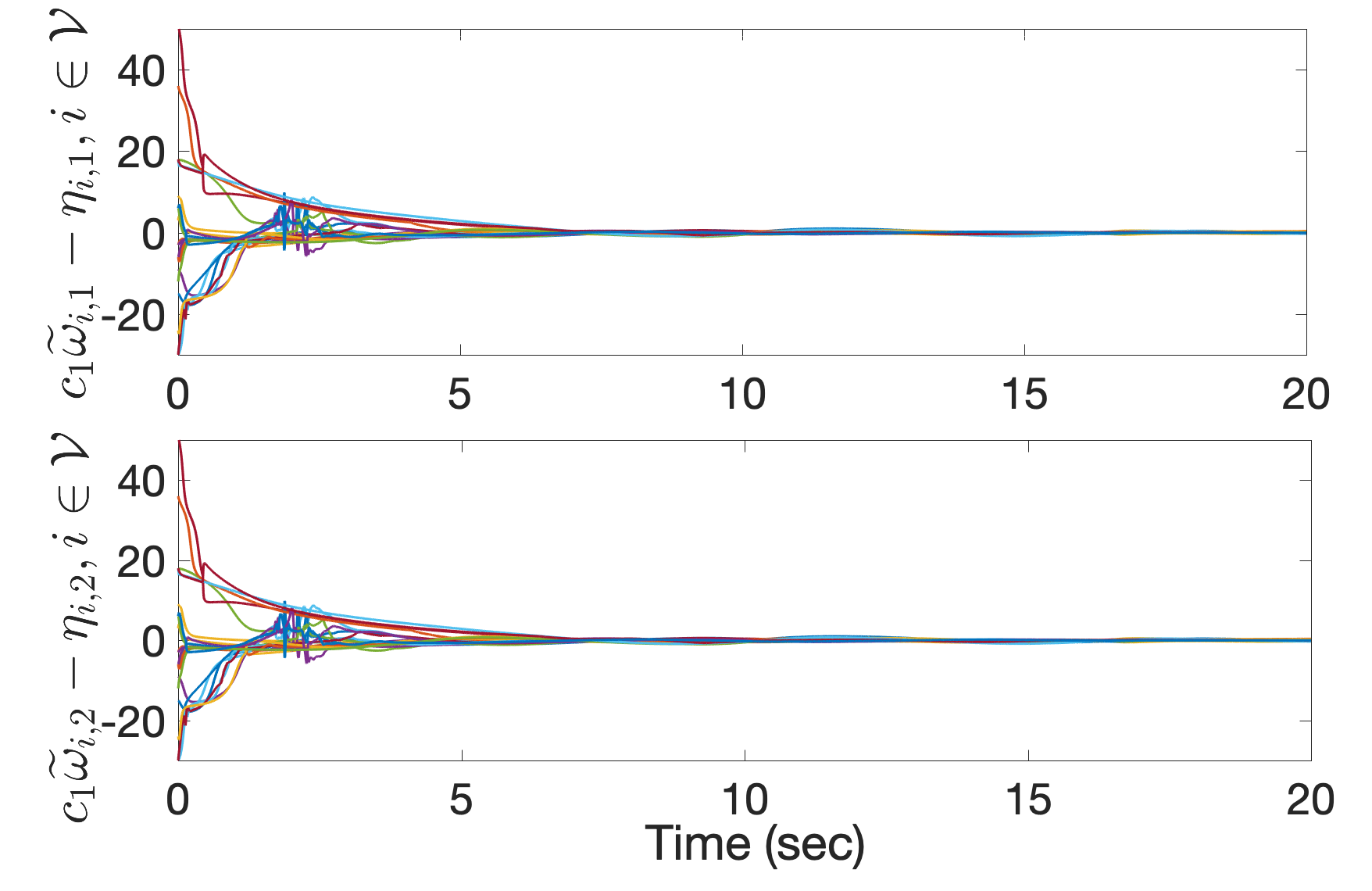}
  \caption{Temporal evolution of the ordering-flexible motion coordination terms $c_i\omega_{i,j}-\eta_{i,j}, i\in\mathbb{Z}_1^{22}, j=1, 2$ in Fig.~\ref{20_torus_trajectories} (Case 1).}
  \label{20_torus_coordinates}
\end{figure}

\section{Numerical simulation}

Consider $n=22$ robots described by Eq.~\eqref{kinetic_F}. The sensing range and safe distance are set to be $R=0.6, r=0.4$, respectively.
The potential function $\alpha(\cdot)$ is set the same as Eq.~\eqref{exam_poten_func} with the designed $R$ and $r$.
Since the auxiliary column vector $\mathbf{m}$ in Eq.~\eqref{high_eq_GVF1} is set to be $\mathbf{m}=[0,\dots,0,-1,1]\t\in\mathbb{R}^{n+2}$, one has that the dynamics of target virtual coordinates $\Omega^{\ast}$ are $\dot{\Omega}^{\ast}=[(-1)^n, (-1)^n]\t$ with Eq.~\eqref{dynamic_virtual_target}, where the initial values of the target virtual coordinates are set to be $\Omega^{\ast}(0)=[0,0]\t$ without loss of generality.
 Moreover, to estimate $\Omega^*$ in a distributed manner according to Remark~\ref{remark_c3}, we can achieve $\lim_{t\rightarrow\infty}\widehat{\Omega}_i(t)-\Omega^{\ast}(t)=\mathbf{0}, i\in \mathcal V$, exponentially.

In what follows, we consider a desired torus surface $\mathcal S_i^{phy}, i\in{\cal V}$ in the 3D Euclidean space to validate the ordering-flexible multi-robot {\it surface navigation}, of which the parametrization is $x_{i,1}=(6+2\cos\omega_{i,1})\cos\omega_{i,2}, x_{i,2}=(6+2\cos\omega_{i,1})\sin\omega_{i,2}, x_{i,3}=2\sin\omega_{i,1}, i\in{\cal V}$, satisfying Assumptions~\ref{assp_error}, \ref{assp_area} and \ref{assp_derivative}. The parameters in the CGVF~\eqref{desired_law} are set to be $k_{i,1}=0.6, k_{i,2}=0.6, k_{i,3}=0.6, c_i=3, i\in{\cal V}$. Fig.~\ref{20_torus_trajectories} illustrates two cases (i.e., Cases 1-2) of the $N=22$ robots' trajectories from different initial states (black arrows) satisfying Assumption~\ref{assp_intial_value} to the final motion coordination (red arrows) maneuvering along a desired torus surface. It is observed from the top view in subfigures (b),~(d) of Fig.~\ref{20_torus_trajectories} that the coordinations of robots are achieved with different orderings.
Additionally, to illustrate the evolution of the detailed states with different initial positions, we hereby take Case 1 in Fig.~\ref{20_torus_trajectories} (a)-(b) as an illustrative example 
in more complex 22-robot {\it surface navigation.} Fig.~\ref{20_torus_trajectories_errors} describes that $\lim_{t\rightarrow\infty}\phi_{i,1}(t)=0, \lim_{t\rightarrow\infty}\phi_{i,2}(t)=0, \lim_{t\rightarrow\infty}\phi_{i,3}(t)=0, i\in\mathbb{Z}_1^{22}$, which verifies surface approaching of all robots ${\cal V}$. Fig.~\ref{20_torus_coordinates} depicts the ordering-flexible motion coordination term approaching zeros, i.e., $\lim_{t\rightarrow\infty}\{c_i\omega_{i,j}(t)-\eta_{i,j}(t)\}=0, i\in\mathbb{Z}_1^{22}, j=1, 2$, which ensures the surface maneuvering of robots. Thereby,  the effectiveness of the proposed CGVF~\eqref{desired_law} and the feasibility of Theorem~\ref{theorem_orderingfree} are both verified in large-scale scenarios.

\section{Conclusion}
In this note, we have presented a distributed CGVF for multiple robots to realize ordering-flexible motion coordination and maneuvering on a desired 2D surface. 
The singularity-free and global-convergence properties of CGVF are both guaranteed by regarding the surface parameters as additional virtual coordinates. The ordering-flexible motion coordination is achieved by both the repulsion of neighboring robots' virtual coordinates and the attraction of the target virtual coordinates. The asymptotic convergence of multi-robot {\it surface navigation} has been rigorously analyzed by taking into account the
influence of the time-varying interaction topologies and exponentially
vanishing estimation errors. Finally, the effectiveness of the algorithm has been verified by numerical simulations.

\bibliographystyle{IEEEtran}
\bibliography{IEEEabrv,ref}
\end{document}